\documentclass{article}

\PassOptionsToPackage{numbers, compress}{natbib}


    \usepackage[preprint]{neurips_2024}


\usepackage[numbers]{natbib}

\usepackage[utf8]{inputenc} 
\usepackage[T1]{fontenc}    
\usepackage{hyperref}       
\usepackage{url}            
\usepackage{booktabs}       
\usepackage{amsfonts}       
\usepackage{nicefrac}       
\usepackage{microtype}      
\usepackage{xcolor}         


\usepackage{algorithm} 
\usepackage{algorithmic}
\usepackage{natbib}
\usepackage{graphicx}
\usepackage{amsmath}
\usepackage{amsthm}
\usepackage{dsfont}
\usepackage{amssymb}
\usepackage{xcolor}
\usepackage{float}
\usepackage{makecell}
\usepackage{amssymb}
\usepackage{pifont}
\newcommand{\cmark}{\ding{51}}%
\newcommand{\xmark}{\ding{55}}%

\theoremstyle{plain}
\newtheorem{theorem}{Theorem}[section]
\newtheorem*{theorem*}{Theorem}
\newtheorem{proposition}[theorem]{Proposition}
\newtheorem*{proposition*}{Proposition}
\newtheorem{lemma}[theorem]{Lemma}
\newtheorem*{lemma*}{Lemma}

\newtheorem{remark}[theorem]{Remark}
\theoremstyle{definition}

\newcommand{\suchthat}{\;\ifnum\currentgrouptype=16 \middle\fi|\;}

\DeclareMathOperator*{\argmin}{arg\,min}

\usepackage{stfloats}
\fnbelowfloat
\usepackage{perpage} 
\MakePerPage{footnote} 

\usepackage{enumitem}

\usepackage{tikz}

\bibliographystyle{abbrv} 

\title{MetaCURL: Non-stationary Concave Utility Reinforcement Learning }

%

\newcommand*\samethanks[1][\value{footnote}]{\footnotemark[#1]}

\author{
 Bianca Marin Moreno \\
  Inria\thanks{Univ. Grenoble Alpes, Inria, CNRS, Grenoble INP,
LJK, 38000 Grenoble, France.} \\
EDF R$\&$D\thanks{EDF Lab, 7 bd Gaspard Monge, 91120 Palaiseau, France} \\
   \And
 Margaux Brégère \\
 Sorbonne Université\thanks{Sorbonne Université LPSM, Paris, France} \\
 EDF R$\&$D\samethanks[2] \\
  \And
 Pierre Gaillard \\
  Inria\samethanks[1]\\
  \And
 Nadia Oudjane\\
  EDF R$\&$D\samethanks[2]\\
}

\begin{document}

\maketitle

\begin{abstract}
We explore online learning in episodic loop-free Markov decision processes on non-stationary environments (changing losses and probability transitions). Our focus is on the Concave Utility Reinforcement Learning problem (CURL), an extension of classical RL for handling convex performance criteria in state-action distributions induced by agent policies. While various machine learning problems can be written as CURL, its non-linearity invalidates traditional Bellman equations. Despite recent solutions to classical CURL, none address non-stationary MDPs. This paper introduces MetaCURL, the first CURL algorithm for non-stationary MDPs. It employs a meta-algorithm running multiple black-box algorithms instances over different intervals, aggregating outputs via a sleeping expert framework. The key hurdle is partial information due to MDP uncertainty. Under partial information on the probability transitions (uncertainty and non-stationarity coming only from external noise, independent of agent state-action pairs), we achieve optimal dynamic regret without prior knowledge of MDP changes. Unlike approaches for RL, MetaCURL handles full adversarial losses, not just stochastic ones. We believe our approach for managing non-stationarity with experts can be of interest to the RL community.
\end{abstract}

\newcommand\blfootnote[1]{%
  \begingroup
  \renewcommand\thefootnote{}\footnotetext{#1}%
  \endgroup
}

\blfootnote{Correspondence to \texttt{bianca.marin-moreno@inria.fr}}

\section{Introduction}

We consider the task of learning in an episodic Markov decision process (MDP) with a finite state space $\mathcal{X}$, a finite action space $\mathcal{A}$, episodes of length $N$, and a probability transition kernel $p: = (p_n)_{n \in [N]}$ such that for all $(x,a) \in \mathcal{X} \times \mathcal{A}$, $p_n(\cdot|x,a) \in \mathcal{S}_{\mathcal{X}}$. For any finite set $\mathcal{B}$, we denote by $\mathcal{S}_{\mathcal{B}}$ the simplex induced by this set, and by $|\mathcal{B}|$ its cardinality. For all $d \in \mathbb{N}$ we let $[d] := \{1,\ldots,d\}$. At each time step $n$, an agent in state $x_n$ chooses an action $a_n \sim \pi_n(\cdot|x_n)$ by means of a policy, and moves to the next state $x_{n+1} \sim p_{n+1}(\cdot|x_n,a_n)$, inducing a state-action distribution sequence $\mu^{\pi,p} := (\mu_n^{\pi,p})_{n \in [N]}$, where $\mu_n^{\pi,p} \in \mathcal{S}_{\mathcal{X} \times \mathcal{A}}$ for all $n \in [N]$.

In many applications of learning in episodic MDPs, an agent aims at finding an optimal policy $\pi$ maximizing/minimizing a concave/convex function $F$ of its state-action distribution, known as the Concave Utility Reinforcement Learning (CURL) problem:
\begin{equation}\label{classic_CURL}
    \min_{\pi \in (\mathcal{S}_{\mathcal{A}})^{\mathcal{X} \times N} } F(\mu^{\pi,p}).
\end{equation}
CURL extends reinforcement learning (RL) from linear to convex losses. Many machine learning problems can be written in the CURL setting, including: RL, where for a loss function $\ell$, $F(\mu^{\pi,p}) = \langle \ell, \mu^{\pi,p} \rangle$; pure RL exploration \citep{Hazan_curl}, where $F(\mu^{\pi, p}) = \langle \mu^{\pi,p}, \log(\mu^{\pi,p}) \rangle$; imitation learning \citep{imitation_learning, online_imitation_learning} and apprenticeship learning \citep{apprenticeship_1, apprenticeship_2}, where $F(\mu^{\pi,p}) = D_g(\mu^{\pi,p}, \mu^*)$, with $D_g$ representing a Bregman divergence induced by a function $g$ and $\mu^*$ being a behavior to be imitated; certain instances of mean-field control \citep{MFC}, where $F(\mu^{\pi,p}) = \langle \ell(\mu^{\pi,p}), \mu^{\pi,p} \rangle$; mean-field games with potential rewards \citep{pfeiffer}; among others. The CURL problem alters the additive structure inherent in standard RL, invalidating the classical Bellman equations, requiring the development of new algorithms.

Most of existing works on CURL focus on stationary environments \cite{Hazan_curl, zhang_2020, zhang_2021b, pg_curl,zahavy_2021,perrolat_curl}. However, in practical scenarios, environments are rarely stationary. The work of \cite{moreno2023efficient} is the first to address online CURL with adversarial objective functions. However, their work assumes stationary probability kernels and presents results in terms of static regret (performance comparable to an optimal policy). In non-stationary scenarios, it is more relevant to minimize dynamic regret—the gap between the learner's total loss and that of any policy sequence (see Eq.~\eqref{dynamic_regret} for formal definition). In this work we address this problem by introducing the first algorithm for CURL handling adversarial objective functions and non-stationary probability transitions, achieving near-optimal dynamic regret.

\textbf{High-level idea.} Our approach, called MetaCURL, draws inspiration from the online learning literature. In online learning \citep{Cesa-Bianchi_Lugosi_2006}, non-stationarity is often managed by running multiple baseline algorithm instances from various starting points and dynamically selecting the best performer using an "expert" algorithm. This strategy has demonstrated effectiveness in settings with complete information \citep{FLH, IFLH, pierre_non_stationary, coin_betting}. With MetaCURL, we extend this concept to decision-making in MDPs. Unlike classical online learning, the main challenge faced is uncertainty, which puts us in a setting with only partial information. The learner is unable to observe the agent's loss under policies other than the one played, as the true probability kernel of the MDP is unknown. 

MetaCURL is a general algorithm that can be applied with any baseline algorithm with low dynamic regret in near-stationary environments. CURL approaches suitable as baselines rely on parametric algorithms that would require prior knowledge of the MDP changes to tune their learning rate. MetaCURL also addresses this challenge by simultaneously running multiple learning rates and weighting them in direct proportion to their empirical performance. MetaCURL achieves optimal regret of order $\tilde{O}\big(\sqrt{\Delta^{\pi^*} T} + \min{\{\sqrt{\Delta^p_\infty T}, \; T^{2/3}(\Delta^p)^{1/3}\}}\big)$, where $\smash{\Delta^p_\infty}$ and $\smash{\Delta^p}$ represent the frequency and magnitude of changes of the probability transition kernel respectively, and $\smash{\Delta^{\pi^*}}$ is the magnitude of changes of the policy sequence we compare ourselves with in dynamic regret (see Eqs.~\eqref{eq:measures_p} and~\eqref{eq:measures_pi} for formal definitions). MetaCURL does not require previous knowledge of the degree of non-stationarity of the environment, and can handle adversarial losses. To ensure completeness, we show that Greedy MD-CURL from \cite{moreno2023efficient} fulfills the requirements to serve as a baseline algorithm. This is the first dynamic regret analysis for a CURL approach. 

\begin{table}[b]
  \begin{center}
    \caption{Comparisons of our results with the state-of-the-art in non-stationary RL. Here,  $\smash{\Delta^p_\infty}$,  $\smash{\Delta^p}$ and $\smash{\Delta^{\pi^*}}$ are defined in~\eqref{eq:measures_p} and~\eqref{eq:measures_pi}; and $\Delta^l_\infty$ and $\Delta^l$ measure the RL loss function variations\textsuperscript{\ref{fn:delta_l}}. We introduce $D_T(\Delta_\infty, \Delta) := \min{\{{\sqrt{\Delta_\infty T}, \; T^{2/3}\Delta^{1/3}}\}}$.}
  \begin{tabular}{lllllll}
    \toprule
    Algorithm     &  Dynamic Regret in $\tilde{O}$ & \small{RL} & \small{CURL} & \small{\makecell{Adv. \\ losses}} & \small{\makecell{No prior\\knowledge}} & \small{\makecell{Explo- \\ration}}\\
    \midrule
    MetaCURL (ours) & $D_T(\Delta_\infty^p,\Delta^p) + \sqrt{\Delta^{\pi^*} T} $  & \cmark & \cmark & \cmark & \cmark & \xmark    \\ 
    SoTA in RL  \cite{master}  & $D_T(\Delta_\infty^p + \Delta_\infty^l, \Delta^p + \Delta^l) $  & \cmark & \xmark & \xmark & \cmark & \cmark \\
    \bottomrule
  \end{tabular}
  \end{center}
  \label{summary_meta_curl}
\end{table}

\textbf{Comparisons.} Without literature on non-stationary CURL, we review non-stationary RL approaches. Most methods \cite{sw, sw-ucrl2, ortner_ads, kernel_non_stationary, power_model_free, model_free_nonstat_rl} typically rely on prior knowledge of the MDP's non-stationarity degree, while MetaCURL does not. Let $\Delta^l_\infty$ and $\Delta^l$ represent the frequency and magnitude of change in the RL loss function, respectively\footnote{$\Delta^l:= 1 + \sum_{t=1}^{T-1} \Delta_t^l$ and $\Delta^l_\infty := 1 + \sum_{t=1}^{T-1} \mathds{1}\{\Delta_t^l \neq \Delta_{t+1}^l\}$, where $\Delta_t^l := \sum_{n=1}^N \max_{x,a} |\ell_n^t(x,a) - \ell_n^{t+1}(x,a)|$ and $\ell_n^t(x,a)$ is the expected loss suffered by choosing action $a$ in state $x$ at step $n$ of round~$t$.\label{fn:delta_l}}. Recently, \cite{master} achieved a regret of $\smash{\tilde{O}\big( \min{\{\sqrt{(\Delta_\infty^p + \Delta_\infty^l) T}, \; T^{2/3}(\Delta^p + \Delta^l)^{1/3}\}} \big)}$, a near-optimal result as demonstrated by \cite{model_free_nonstat_rl}, without requiring prior knowledge of the environment's variation. However, this regret bound is tied to changes in loss functions, making it ineffective against adversarial losses. In contrast, rather than depending on the magnitude of variation of the loss function, MetaCURL's bound depends on the magnitude of variation of the policy sequence we use for comparison in dynamic regret. This allows it to handle adversarial losses, and to compare against policies with a more favorable bias-variance trade-off, which may not align with the optimal policies for each loss. In addition, we improve this dependency by paying it as $\sqrt{\Delta^{\pi^*}T}$ instead of $(\Delta^{\pi^*})^{1/3} T^{2/3}$. 

In all these RL methods, optimism is key for handling uncertainty and non-stationary environments, often achieved through computationally expensive UCRL methods (Upper Confidence RL \cite{UCRL2}). We simplify the dynamic assumptions, so that our approach can handle uncertainty without exploration (see Section~\ref{sec:general_framework} for formal details), allowing the use of computationally efficient algorithms as baselines, like policy optimization (PO). We believe that our approach can serve as a basis for future work in RL attempting to deal with non-stationary dynamics requiring full exploration in a more efficient way. We summarize comparisons in Table~\ref{summary_meta_curl}.

\textbf{Other related works.} In addition to non-stationarity, there is a series of works on RL with adversarial losses but \textit{stationary} probability transitions, with results only on static regret \cite{mdp_stat_1, mdp_stat_2,mdp_stat_3,mdp_stat_4,mdp_stat_5,mdp_stat_6}. Another line of research is known as corruption-robust RL. It differs from non-stationary MDPs in that it assumes a ground-truth MDP and measures adversary malice by the degree of ground-truth corruption \cite{corr_robust_1,corr_robust_2,corr_robust_3,corr_robust_4,corr_robust_5}.




\textbf{Contributions.} We resume our main contributions below:
\begin{itemize}[noitemsep,topsep=0pt,leftmargin=10pt]
    \item We introduce MetaCURL, the first algorithm for non-stationary CURL. Under the framework described in Section~\ref{sec:general_framework}, MetaCURL achieves the optimal dynamic regret bound of order $\smash{\tilde{O}\big(\sqrt{\Delta^{\pi^*} T} + \min{\{\sqrt{\Delta^p_\infty T}, T^{2/3}(\Delta^p)^{1/3}\}}\big)}$, without requiring previous knowledge of the degree of non-stationarity of the MDP. MetaCURL handles full adversarial losses and improves the dependency of the regret on the total variation of policies. 
    \item MetaCURL is the first adaptation of Learning with Expert Advice (LEA) to deal with uncertainty in non-stationary MDPs.
    \item We establish the first dynamic regret upper bound for an online CURL algorithm in a nearly stationary environment, which can serve as a baseline routine for MetaCURL.
\end{itemize}

\textbf{Notations.} Let $\|\cdot\|_1$ be the $L_1$ norm, and for all ${v := (v_n)_{n \in [N]}}$, such that ${v_n \in \mathbb{R}^{\mathcal{X} \times \mathcal{A}}}$ we define ${\|v\|_{\infty, 1} := \sup_{1 \leq n \leq N} \|v_n\|_1}$. 

\section{General framework: non-stationary CURL}\label{sec:general_framework}


When an agent plays a policy $\pi : = (\pi_n)_{n \in [N]}$ in an episodic MDP with probability transition $p$, it induces a state-action distribution sequence (also called the occupancy-measure \cite{o-reps}), which we denote by $\mu^{\pi,p}: = (\mu^{\pi,p}_n)_{n \in [N]}$, with $\mu_n^{\pi,p} \in \mathcal{S}_{\mathcal{X} \times \mathcal{A}}$. It can be calculated recursively for all $(x,a) \in \mathcal{X}$ and $n \in [N]$ by taking $\mu_0^{\pi,p}(x,a) = \mu_0(x,a)$ fixed, and
\begin{equation}\label{mu_induced_pi}
\begin{split}
    \mu_n^{\pi,p}(x,a) &= \sum_{(x',a') \in \mathcal{X} \times \mathcal{A}} \mu_{n-1}^{\pi,p}(x',a') p_n(x|x',a') \pi_n(a|x).
\end{split}
\end{equation}
\textbf{Offline CURL.} The classic CURL optimization problem in Eq.~\eqref{classic_CURL} considers minimizing a function ${F:(\mathcal{S}_{\mathcal{X} \times \mathcal{A}})^N \rightarrow \mathbb{R}}$, here defined as $F(\mu) := \sum_{n=1}^N f_n(\mu_n)$ with $f_n$ a convex function over $\mu_n$ with values in $[0,1]$, across all policies that induce $\mu^{\pi,p}$. Note that $F$ is not convex on the policy $\pi$. To convexify the problem, we define the set of state-action distributions satisfying the Bellman flow of a MDP with transition kernel $p$ as
 \begin{align}\label{set_convex_mu}
        \mathcal{M}^p_{\mu_0} :=  \bigg\{ \mu \big| \; \sum_{a' \in \mathcal{A}} \mu_{n}(x',a') =  \sum_{x \in \mathcal{X} , a \in \mathcal{A}} p_{n}(x'|x,a) \mu_{n-1}(x,a)\;, \forall x' \in\mathcal{X}, 
         \forall n \in [N] \bigg\}.
 \end{align}
 For any $\mu \in \mathcal{M}_{\mu_0}^p$, there exists a strategy $\pi$ such that $\mu^{\pi,p} = \mu$. It suffices to take $\pi_n(a|x) \propto \mu_n(x,a)$ when the normalization factor is non-zero, and arbitrarily defined otherwise. There is thus an equivalence between the CURL problem (optimization on policies) and a convex optimization problem on state-action distributions satisfying the Bellman flow:
 \begin{equation}\label{eq:convexifying_curl}
     \min_{\pi \in (\mathcal{S}_\mathcal{A})^{\mathcal{X} \times N}} F(\mu^{\pi,p}) \equiv \min_{\mu \in \mathcal{M}_{\mu_0}^p} F(\mu).
 \end{equation}
 \textbf{Online CURL.} In this paper we consider the online CURL problem in a non-stationary setting. We assume a finite-horizon scenario with $T$ episodes. An oblivious adversary generates a sequence of changing objective functions $(F^t)_{t \in [T]}$, with $F^t$ being fully communicated to the learner only at the end of episode~$t$. We assume $F^t$ is $L_F$-Lipschitz with respect to the $\|\cdot\|_{\infty,1}$ norm for all $t$. The probability transition kernel is also allowed to evolve over time and is denoted by $p^t$ at episode~$t$. The learner's objective is then to calculate a sequence of strategies $(\pi^t)_{t \in [T]}$ minimizing a total loss $\smash{L_T := \sum_{t=1}^T F^t(\mu^{\pi^t, p^t})}$, while dealing with adversarial objective functions $F^t$ and changing probability transition kernels $p^t$. To measure the learner's performance, we use the notion of dynamic regret (the difference between the learner's total loss and that of any policy sequence $(\pi^{t,*})_{t \in [T]}$):
 \vspace{0.1cm}
\begin{equation}\label{dynamic_regret}
   \textstyle \smash{R_{[T]}\big((\pi^{t,*})_{t \in [T]} \big) := \sum_{t \in [T]} F^t(\mu^{\pi^t,p^t}) - F^t(\mu^{\pi^{t,*}, p^t}).}
\end{equation}

\textbf{Non-stationarity measures.} We consider the following two non-stationary measures $\Delta_\infty^p$ and $\Delta^p$ on the probability transition kernels that respectively measure abrupt  and smooth variations:
\begin{equation}
    \label{eq:measures_p}
    \Delta_\infty^{p} := 1 + \sum_{t=1}^{T-1} \mathds{1}_{\{p^t \neq p^{t+1}\}}, \quad    \Delta^p := 1 + \sum_{t=1}^{T-1} \Delta_t^p, \quad \Delta_t^p := \max_{n, x,a} \|p^t_n(\cdot|x,a) - p^{t+1}_n(\cdot|x,a) \|_1 \,.
\end{equation}
Regarding dynamic regret, we define for any sequence of policies $(\pi^{t,*})_{t \in [T]}$, its non-stationarity measure as
\begin{equation}
    \label{eq:measures_pi}
    \Delta^{\pi^*} := 1 + \sum_{t=1}^{T-1}\Delta_t^{\pi^*}, \qquad \Delta_t^{\pi^*} := \max_{n \in [N], x \in \mathcal{X}} \|\pi^{t,*}_n(\cdot|x) - \pi^{t+1,*}_n(\cdot|x) \|_1 \,.
\end{equation}
Moreover, for any interval $I \subseteq [T]$, we write $\Delta^p_{I} := \sum_{t \in I} \Delta^p_t$ and $\Delta^{\pi^*}_I :=\sum_{t \in I} \Delta^{\pi^*}_t$.



\textbf{Dynamic's hypothesis.}
For each episode $t$, let $(x^t_0, a^t_0) \sim \mu_0(\cdot)$, and for all time steps $n \in [N]$,
\begin{equation}\label{dynamics}
    x^t_{n+1} = g_n(x^t_n, a^t_n, \epsilon^t_n),
\end{equation}
where $g_n$ represents the deterministic part of the dynamics, and $(\epsilon^t_n)_{n \in [N]}$ is a sequence of independent external noises such that $\epsilon_n^t \sim h^t_n(\cdot)$, where $h_n^t$ is any centered distribution. Note that these dynamics imply that the probability transition kernel can be written as $p^t_{n+1}(x'|x,a) = \mathbb{P}\big(g_n(x,a,\epsilon_n^t) = x' \big).$ Different variants of this problem can be considered, depending on the prior information available about the dynamics in Eq.~\eqref{dynamics}. In this article we consider the case where $g_n$ is fixed and known by the learner, but $h_n^t$ is unknown and can change (hence the source of uncertainty and non-stationarity of the transitions). When $g_n$ is unknown, exploration is essential to manage uncertainty. However, using optimistic approaches to deal with exploration such as UCRL (Upper Confidence RL \cite{UCRL2}) in CURL is computationally inefficient, because of an extra optimization problem per episode \cite{UC-O-REPS}. If we know $g_n$ but lack knowledge of $h_n^t$, we still face the difficulties of uncertainty and non-stationarity. However, we can tackle these challenges without requiring active exploration as learning $h_n^t$ does not depend on the policy played. This allows us to consider policy optimization algorithms as baselines, assuring lower complexity. 

This particular dynamic is also motivated by many real-world applications:
\begin{itemize}[noitemsep,topsep=0pt,leftmargin=10pt]
    \item Controlling a fleet of drones in a known environment, subject to external influences due to weather conditions or human interventions. 
    \item Addressing data center power management aiming to cut energy expenses while maintaining service quality. Workload fluctuations cause dynamic job queue transitions, and volatile electricity prices lead to varying operational costs. The probabilities of task processing by each server are predetermined, but the probabilities of task arrival are uncertain \citep{data_center}.
    \item As renewable energy use increases and energy demand grows, balancing production and consumption becomes harder. Certain devices, like electric vehicle batteries and water heaters, can serve as flexible energy storage options. However, this requires electric grids to establish policies regulating when these devices turn on or off to match a desired consumption profile. These profiles can fluctuate daily due to changes in energy production levels. Despite knowing the devices' physical dynamics, household consumption habits remain unpredictable and constantly changing \cite{adrien, moreno2023reimagining}.
\end{itemize}

\textbf{Outline.} In this paper, we propose a novel approach to handle non-stationarity in MDPs, being the first to propose a solution to CURL within this context. We begin in Section~\ref{sec:main_idea} by discussing the idea behind our algorithm's construction and the key challenges within our framework. Section~\ref{sec:meta} introduces MetaCURL, while Section~\ref{sec:regret_analysis} presents the main results of our regret analysis. The proofs' specifics are provided in the appendix.

\section{Main idea}\label{sec:main_idea}

\textbf{A hypothetical learner who achieves optimal regret.} 
Let $m >1$. Assume a hypothetical learner that could compute a sequence of restart times $1 = t_1 < \ldots < t_{m+1} = T+1$, where for each $i \in [m]$ we let $\mathcal{I}_i := [t_i, t_{i+1} -1]$, such that \vspace*{-7pt}
\begin{equation}\label{near_stat_MDP}
 \hspace{2cm}\Delta_{\mathcal{I}_i}^p \leq \Delta^p/m. 
\end{equation}
Consider a parametric algorithm that, when computing $(\pi^t)_{t \in I}$ with learning rate $\lambda$ for any interval $I \subseteq [T]$, attains a dynamic regret relative to any sequence of policies $(\pi^{t,*})_{t \in I}$ upper bounded by
\begin{equation}\label{blackbox_regret_near_stat}
    R_{I}\big((\pi^{t,*})_{t \in I} \big) \leq c_1 \lambda |I| + \lambda^{-1}(c_2 \Delta^{\pi^*}_{I}  + c_3)  + |I| \Delta^p_{I},
\end{equation}
where $(c_j)_{j \in [3]}$ are constants that may depend on the MDP parameters, and on the interval size only in logarithmic terms. This kind of regret bound holds for Greedy MD-CURL from \cite{moreno2023efficient} as we show in Appendix~\ref{app:greedy_md_curl}. Suppose the hypothetical learner could also access $\Delta^{\pi^*}$ to calculate the optimal learning rate. Hence, playing such an algorithm for all horizon $T$ with the optimal learning rate, the learner would have a dynamic regret upper bounded by
\begin{equation*}
    R_{[T]}\big((\pi^{t,*})_{t \in [T]} \big)  \leq 2 \sqrt{c_1 (c_2 \Delta^{\pi^*} + c_3 m) T} + T \Delta^p m^{-1}.
\end{equation*}
Optimizing over $m$, the learner would obtain the optimal regret of order $\tilde{O}\big(\sqrt{\Delta^{\pi^*} T} + (\Delta^p)^{1/3} T^{2/3} \big)$. In the case where the MDP is piece-wise stationary, if the learner takes $\mathcal{I}_i$ such that $\Delta^p_{\mathcal{I}_i} = 0$, it obtains a regret of order \smash{$O(\sqrt{\Delta^{\pi^*} T} + \sqrt{\Delta_\infty^p T})$}, where $\Delta_\infty^p$ is the number of times the probability transitions of the MDP change over $[T]$. 

\textbf{A meta algorithm to learn restart times.}
In reality, the restart times of Eq.~\eqref{near_stat_MDP}, and the optimal learning rate, are unknown to the learner. Hence, we propose to build a meta aggregation algorithm to learn both. Let $\mathcal{E}$ represent a parametric black-box algorithm with dynamic regret as in Eq.~\eqref{blackbox_regret_near_stat}. We introduce a meta algorithm $\mathcal{M}$ that, takes as input a finite set of learning rates $\Lambda$, and at each episode $t$, initializes $|\Lambda|$ instances of $\mathcal{E}$, denoted as $\mathcal{E}^{t,\lambda}$ for each $\lambda \in \Lambda$. Each $\mathcal{E}^{t,\lambda}$ operates independently within the interval $[t, T]$. At time $t$, $\mathcal{M}$ combines the decisions from the active runs $\{\mathcal{E}^{s,\lambda}\}_{s \leq t, \lambda \in \Lambda}$ by weighted average. The idea is that at time $t$, some of the outputs of $\{\mathcal{E}^{s,\lambda}\}_{s \leq t, \lambda \in \Lambda}$ are not based on data prior to $t' < t$, so if the environment changes at time $t'$, these outputs can be given a greater weight by the meta algorithm, enabling it to adapt more quickly to the change. At the same time, we expect a larger weight will be given to the empirically best learning rate. Let $\mathcal{M}(\mathcal{E}, \Lambda)$ be the complete algorithm. 

\begin{remark}\label{rmk:comp_complex}
    The meta-algorithm increases the computational complexity of the parametric black-box algorithm by a factor of $T \times |\Lambda|$, as it requires updating $t \times |\Lambda|$ instances at each episode $t$. By strategically designing intervals to run the black-box algorithms, previous works on online learning have reduced computational complexity to $O(\log(T))$ \citep{strongly_adaptive, FLH, expert_intervals}. Extending our analysis to these intervals is straightforward, but it would complicate the presentation of the paper. Thus, we decided to present our results using the naive choice of intervals. Also, in Section~\ref{sec:regret_analysis}, we show that a learning rate grid with $|\Lambda| = \log(T)$ is sufficient to obtain the optimal regret.
\end{remark} 

\textbf{Regret decomposition.} Denote by $\pi^{t,s, \lambda}$ the policy output from $\mathcal{E}^{s,\lambda}$ at episode $t$, for learning rate $\lambda$, for all $s \leq t$, and by $\pi^t$ the policy output by the meta algorithm $\mathcal{M}(\mathcal{E}, \Lambda)$ to be played by the learner. The regret of $\mathcal{M}(\mathcal{E}, \Lambda)$ can be decomposed as the sum of the regret suffered by the meta algorithm aggregation scheme, $\mathcal{M}$, and the regret from the black-box algorithm, $\mathcal{E}$, played with any learning rate $\lambda \in \Lambda$. The dynamic regret, defined in Eq.~\eqref{dynamic_regret}, can be decomposed, for any set of intervals $\mathcal{I}_i = [t_i, t_{i+1}-1]$, with $1 = t_1 < \ldots < t_{m+1} = T+1$, and for any learning rate $\lambda \in \Lambda$, as
\begin{align}\label{eq:main_regret_decomp}
    R_{[T]}\big((\pi^{t,*})_{t \in [T]} \big) &= \underbrace{\sum_{i=1}^m \sum_{t \in \mathcal{I}_i} F^t(\mu^{\pi^t, p^t}) - F^t(\mu^{\pi^{t,t_i,\lambda}, p^t})}_{\text{Meta algorithm regret}} + \sum_{i=1}^m \underbrace{\sum_{t \in \mathcal{I}_i} F^t(\mu^{\pi^{t,t_i,\lambda}, p^t}) - F^t(\mu^{\pi^{t,*}, p^t})}_{\text{Black-box regret on $ \mathcal{I}_i$}} \nonumber \\ 
    &:= R_{[T]}^{\text{meta}} + R_{[T]}^{\text{black-box}} \big((\pi^{t,*})_{t \in [T]} \big).
\end{align}
The black-box regret on $\mathcal{I}_i$ is exactly the standard regret for $T = |\mathcal{I}_i|$ with a learning rate of $\lambda$. Hence, in order to prove low dynamic regret for $\mathcal{M}(\mathcal{E}, \Lambda)$ we have to: show that $\mathcal{M}$ incurs a low dynamic regret in each interval $\mathcal{I}_i$; find a black-box algorithm $\mathcal{E}$ for CURL that has dynamic regret as in Eq.~\eqref{blackbox_regret_near_stat}, and build a learning rate grid $\Lambda$ allowing us to perform nearly as well as the optimal learning rate.

\section{MetaCURL Algorithm}\label{sec:meta}
We call our meta-algorithm $\mathcal{M}$ MetaCURL. It is based on sleeping experts, is parameter-free, and achieves optimal regret. Its construction is described below.
\subsection{Learning with expert advice}\label{sec:expert}
\textbf{General setting.} In Learning with Expert Advice (LEA), a learner makes sequential online predictions $u^1, \ldots, u^T$ in a decision space $\mathcal{U}$, over a series of $T$ episodes, with the help of $K$ experts \cite{LEA_1, LEA_2,Cesa-Bianchi_Lugosi_2006}. For each round $t$, each expert $k$ makes a prediction $u^{t,k}$, and the learner combines the experts' predictions by computing a vector $v^t := (v^{t,1}, \ldots, v^{t,K}) \in \mathcal{S}_K$, and predicting the convex combination of experts' prediction $u^t := \sum_{k=1}^K v^{t,k} u^{t,k}$. The environment then reveals a convex loss function $\ell^t : \mathcal{U} \rightarrow \mathbb{R}$. Each expert suffers a loss $\ell^{t,k} := \ell^t(u^{t,k})$, and the learner suffers a loss $\hat{\ell}^t := \ell^t(u^t)$. The learner's objective is to keep the cumulative regret with respect to each expert as low as possible. For each expert $k$, this quantity is defined as $\text{Reg}_{[T]}(k) := \sum_{t=1}^T \hat{\ell}^t - \ell^{t,k}$. 

\textbf{Sleeping experts.} In our case, each black-box algorithm is an expert that does not produce solutions outside its active interval. This problem can be reduced to the sleeping expert problem \cite{sleeping_expert_2, sleeping_experts_1}, where experts are not required to provide solutions at every time step. Let $I^{t,k} \in \{0,1\}$ define a signal equal to $1$ if expert $k$ is active at episode $t$ and $0$ otherwise. The algorithm knows $(I^{t,k})_{k \in [K]}$ and assigns a zero weight to sleeping experts ($I^{t,k} = 0$ implies $v^{t,k} = 0$). We would like to have a guarantee with respect to expert $k \in [K]$ but only when it is active. Hence, we now aim to bound a cumulative regret that depends on the signal $I^{t,k}$: $\text{Reg}_{[T]}^{\text{sleep}}(k) := \sum_{t=1}^T I^{t,k} (\hat{\ell}^t - \ell^{t,k})$. There is a generic reduction from the sleeping expert framework to the general LEA setting \cite{sleeping_reduction_1, sleeping_reduction_2} (see Appendix~\ref{subsec:subsleeping_LEA}).


\subsection{Meta-aggregation scheme} 

In every episode $t$, for every learning rate $\lambda \in \Lambda$ and $s \leq t$, an instance $\mathcal{E}^{s,\lambda}$ of the black-box algorithm acts as an expert computing a policy $\pi^{t,s,\lambda}$. The meta algorithm aims to aggregate these predictions using a sleeping expert approach based on the expert's losses. However, within CURL's framework, the meta algorithm faces two challenges:

\begin{algorithm}
\caption{MetaCURL with EWA}\label{alg:meta}
\begin{algorithmic}[1]
\STATE {\bfseries Input:} number of episodes $T$, finite set of learning rates $\Lambda$, black-box algo. $\mathcal{E}$, EWA learning rate $\eta = \sqrt{8 \log(T) T}$
\STATE \textbf{Initialization:} $\hat{p}^1_n(\cdot|x,a) := \frac{1}{|\mathcal{X}|}$ for all $n \in [N], (x,a) \in \mathcal{X} \times \mathcal{A}$
    \FOR{$t= 1,\ldots,T$}
    \STATE Start $|\Lambda|$ new instances of $\mathcal{E}$ denoted by $\mathcal{E}^{t,\lambda}$ for all $\lambda \in \Lambda$, assign each of them a new weight $v^{t,t,\lambda} = \frac{1}{|\Lambda| t}$, and normalize weight vectors $v^{t,s, \lambda}$ for $s \in [t-1]$ such that $v^t := (v^{t,s,\lambda})_{s \leq t, \lambda \in \Lambda}$ is a probability vector in $\mathbb{R}^{t \times \Lambda}$
    \STATE For $s \leq t$ and $\lambda \in \Lambda$, $\mathcal{E}^{s,\lambda}$ outputs $\pi^{t,s, \lambda}$
    \STATE Compute recursively $\mu^{\pi^{t,s, \lambda}, \hat{p}^t}$ using Eq.~\eqref{mu_induced_pi} for all $s \leq t$ and $\lambda \in \Lambda$
    \STATE Aggregate the state-action distributions:
    $\mu^t := \sum_{s=1}^t \sum_{\lambda \in \Lambda} \mu^{\pi^{t,s,\lambda}, \hat{p}^t} v^{t,s, \lambda}$
    \STATE Retrieve $\pi^t$ from $\mu^t$: for all $n$, $(x,a)$,
    \begin{equation*}
    \pi^t_n(a|x) = 
        \begin{cases}
            &\frac{\mu^t_n(x,a)}{\sum_{a' \in \mathcal{A}} \mu_n(x,a')}, \quad \text{ if } \mu_n^t(x,a) \neq 0 \\
            & \frac{1}{|\mathcal{A}|}, \quad \text{ if } \mu_n^t(x,a) = 0
        \end{cases}
    \end{equation*}
    \STATE Learner plays $\pi^t$: Agent starts at $(x_0^{t}, a_0^{t}) \sim \mu_0(\cdot)$
    \FOR{$n = 1, \ldots,N$}
        \STATE Environment draws new state $x_{n}^{t} \sim p^{t}_{n}(\cdot|x_{n-1}^{t}, a_{n-1}^{t}) $
        \STATE Learner observes agent's external noise $\varepsilon_n^{t}$
        \STATE Agent chooses an action $a_n^{t} \sim \pi^{t}_n(\cdot|x_n^{t})$
    \ENDFOR
    \STATE Objective function $F^t$ is exposed
    \STATE Compute experts' losses $\ell^{t,s,\lambda} := F^t( \mu^{\pi^{t,s,\lambda}, \hat{p}^t})$, for all $s \leq t$ and $\lambda \in \Lambda$
    \STATE Compute the new weight vector $v^{t+1}$: for all $s \leq t$ and $\lambda \in \Lambda$,
        \[
        v^{t+1,s,\lambda} = \frac{v^{t,s,\lambda} \exp{(-\eta \ell^{t,s,\lambda})}}{\sum_{s'=1}^t\sum_{\lambda'\in \Lambda} v^{t,s',\lambda'} \exp{(-\eta \ell^{t,s',\lambda'})}} \quad \quad \text{(EWA update)}
        \]
    \STATE Use agent's external noise trajectory $(\varepsilon_n^t)_{n \in [N]}$ to compute $\hat{p}^{t+1}$ as in Subsection~\ref{subsec:estimatin_p_t} \label{algo_line_p_est}
    \ENDFOR
\end{algorithmic}
\end{algorithm} 

\textbf{Uncertainty.} At the episode's end, the learner has full information about the objective function $F^t$. If the learner also knew $p^t$, they could recursively calculate the corresponding state-action distribution $\mu^{\pi^{t,s,\lambda},p^t}$ using Eq.~\eqref{mu_induced_pi} and observe the actual loss of each expert, denoted as $F^t(\mu^{\pi^{t,s,\lambda},p^t})$. However, given that $p^t$ is unknown to the learner, the true loss remains unobservable. Consequently, the meta-algorithm needs to create an estimator $\hat{p}^t$ for $p^t$ and utilize it to estimate the losses. We propose a method to compute an estimator $\hat{p}^t$ in Subsection~\ref{subsec:estimatin_p_t}.  

\textbf{Convexity.} As discussed in Section~\ref{sec:general_framework}, the objective functions $F^t$ are not convex over the space of polices. However, CURL is equivalent to a convex problem over the state-action distributions satisfying the Bellman's flow as shown in Eq.~\eqref{eq:convexifying_curl}. Therefore, instead of aggregating policies, the meta algorithm aggregates the associated state-action distributions using the probability estimator $\hat{p}^t$ and the recursive scheme at Eq.~\eqref{mu_induced_pi}. We detail MetaCURL in Alg.~\ref{alg:meta} when employed with the Exponentially Weighted Average forecaster (EWA) as the sleeping expert subroutine (we detail EWA in Appendix~\ref{subsec:EWA}).

\subsection{Building an estimator of $p^t$}\label{subsec:estimatin_p_t} 

To compute an estimator $\hat{p}^t$ of $p^t$ to use in MetaCURL while dealing with the non-stationarity of probability transitions, we employ a second layer of sleeping experts for each $(n,x,a) \in [N] \times \mathcal{X} \times \mathcal{A}$. In each episode $t$, the learner calculates independent samples $x_{n,x,a}^t \sim p^t_n(\cdot|x,a)$ utilizing the external noise sequence $\smash{(\varepsilon_n^t)_{n \in [N]}}$ observed (just let $x_{n,x,a}^t = g_{n-1}(x,a,\varepsilon_{n-1}^t)$, see Eq.~\eqref{dynamics}). Each expert outputs an empirical estimator of $p_n^t(\cdot|x,a)$ using samples across different intervals. We assume $T$ experts, with expert $s$ active in interval $[s,T]$. Expert $s$ at episode $t > s$ outputs:
\[
\hat{p}^{t,s}_{n}(x'|x,a) = \frac{N_{n,x,a}^{s:t-1}(x')}{(t-s)}, \quad \text{with} \; \; N_{n,x,a}^{s:t-1}(x') := \sum_{q=s}^{t-1} \mathds{1}_{\{x_{n,x,a}^{q} = x'\}}.
\]
We let $\hat{p}^t_n(\cdot|x,a)$ be the result of employing sleeping EWA with experts $\hat{p}^{t,s}_n(\cdot|x,a)$, for $s < t$. Typically, in density estimation with EWA, a logarithmic loss $-\log(\cdot)$ is used. However, in this case $-\log(\cdot)$ can be unbounded, so we opt here for a smoothed logarithmic loss, given by, for all $q \in \mathcal{S}_{\mathcal{X}}$,
\begin{equation}\label{loss_p_estimation}
    \ell^t(q) := \sum_{x \in \mathcal{X}} -\log\Big(q(x) + \frac{1}{|\mathcal{X}|} \Big) \mathds{1}_{\{\tilde{x}_{n,x,a}^t = x\}},\text{ where } \tilde{x}_{n,x,a}^t \sim \Big(p_n^t(\cdot|x,a) + \frac{1}{|\mathcal{X}|} \Big)/2.
\end{equation}
The definition of this non-standard loss is further clarified during the regret analysis in Section~\ref{sec:regret_analysis}. This loss function is $1$-exp concave (see Lemma 4 of~\cite{vanderhoeven2023highprobability}), hence the cumulative regret of EWA with respect to each expert $s \in [T]$, for all episodes $\tau \in [s,T]$, satisfies \smash{$\text{Reg}_{[s,\tau]}^{\text{sleep}}(s) = \sum_{t=s}^{\tau} \ell^t(\hat{p}^t_n(\cdot|x,a)) -\ell^{t}(\hat{p}^{t,s}_{n}(\cdot|,x,a)) \leq \log(T)$} (for more information on the regret bounds of EWA with exp-concave losses, see Appendix~\ref{subsec:EWA}). We describe the complete online scheme to compute $\hat{p}^t$ in Alg.~\ref{alg:estimation_p_tilde} at Appendix~\ref{app:estimation_p}.

\section{Regret analysis}\label{sec:regret_analysis}
This section presents the main result concerning MetaCURL's regret analysis. Subsection~\ref{subsec:regret_analysis} shows an upper bound for $\smash{R^{\text{meta}}}$ when MetaCURL is played with EWA and $\hat{p}^t$ is computed as in Subsection.~\ref{subsec:estimatin_p_t}. Subsection~\ref{subsec:black-box_algo} introduces a learning rate grid for MetaCURL when the black-box algorithm meets the dynamic regret criteria in Eq.~\eqref{blackbox_regret_near_stat}, providing an upper bound for $\smash{R^{\text{black-box}}}$. Given the dynamic regret decomposition of Eq.~\eqref{eq:main_regret_decomp}, we see that the combination of these results leads to our main result, the full proof of which can be found in appendix~\eqref{ap:final_regret_bound} :

\begin{theorem}[Main result]\label{thm:main_theorem}
Let $\delta \in (0,1)$. Playing MetaCURL, with a parametric black-box algorithm $\mathcal{E}$ with dynamic regret as in Eq.~\eqref{blackbox_regret_near_stat}, with a learning rate grid $\Lambda: = \big\{2^{-j} | j=0,1,2,\ldots, \lceil \log_2(T)/2 \rceil \big\}$, and with EWA as the sleeping expert subroutine, we obtain, with probability at least $1-2\delta$, for any sequence of policies $(\pi^{t,*})_{t \in [T]}$, 
    \[
    \smash{R_{[T]} \big((\pi^{t,*})_{t \in [T]} \big) \leq \tilde{O}\Big( \sqrt{\Delta^{\pi^*} T} + \min \big\{\sqrt{T \Delta^p_\infty}, \; T^{2/3} (\Delta^p)^{1/3}\big\} \Big).}
    \]
\end{theorem}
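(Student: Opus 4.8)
The plan is to start from the exact regret decomposition of Eq.~\eqref{eq:main_regret_decomp}, which is valid for any partition $\{\mathcal{I}_i\}_{i\in[m]}$ of $[T]$ into intervals $\mathcal{I}_i=[t_i,t_{i+1}-1]$ and any $\lambda\in\Lambda$, and writes $R_{[T]}$ as a meta term $R^{\text{meta}}_{[T]}$ plus a black-box term $R^{\text{black-box}}_{[T]}$. I would bound the two terms separately as functions of $m$, the partition, and $\lambda$, and only at the very end choose these quantities to minimize the sum. Since the partition, $m$, and $\lambda$ live only in the analysis and not in the algorithm, this is legitimate and is exactly what lets the final guarantee hold without prior knowledge of the non-stationarity.

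The black-box term is the easy one: I apply the assumed bound Eq.~\eqref{blackbox_regret_near_stat} on each $\mathcal{I}_i$ and sum. Using $\sum_i|\mathcal{I}_i|=T$ the first two pieces collapse to $c_1\lambda T+\lambda^{-1}(c_2\Delta^{\pi^*}+c_3 m)$, while the residual $\sum_i|\mathcal{I}_i|\Delta^p_{\mathcal{I}_i}$ is handled by two competing choices of partition: in the smooth regime a partition meeting Eq.~\eqref{near_stat_MDP}, i.e.\ $\Delta^p_{\mathcal{I}_i}\le\Delta^p/m$, gives $T\Delta^p/m$; in the piecewise-stationary regime the $\Delta^p_\infty$ maximal constant segments give $\Delta^p_{\mathcal{I}_i}=0$ with $m=\Delta^p_\infty$. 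Because $\Lambda$ is geometric and reaches down to order $T^{-1/2}$, the grid point nearest the minimizer of $c_1\lambda T+\lambda^{-1}(c_2\Delta^{\pi^*}+c_3 m)$ costs only a constant factor, and optimizing over $m$ produces $\tilde O(\sqrt{\Delta^{\pi^*}T}+\min\{\sqrt{\Delta^p_\infty T},\,T^{2/3}(\Delta^p)^{1/3}\})$ for this term.

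The meta term carries the whole partial-information difficulty. Inserting the surrogate occupancy measures built with $\hat p^t$, I split $R^{\text{meta}}_{[T]}$ into (i) a genuine sleeping-expert regret expressed in the observable losses $\ell^{t,s,\lambda}=F^t(\mu^{\pi^{t,s,\lambda},\hat p^t})$, and (ii) two approximation gaps $F^t(\mu^{\pi,p^t})-F^t(\mu^{\pi,\hat p^t})$ with $\pi\in\{\pi^t,\pi^{t,t_i,\lambda}\}$. For (i) I run the EWA potential argument, using the convexity of $F^t$ so that the aggregated $\mu^t$ incurs at most the weighted experts' loss, and exploiting that the $\tfrac{1}{|\Lambda|t}$ initialization together with the per-round renormalization by $\tfrac{t-1}{t}$ localizes the bound: the regret on $\mathcal{I}_i$ against the instance restarted at $t_i$ is at most $\tfrac{1}{\eta}\log(|\Lambda|T)+\tfrac{\eta}{8}|\mathcal{I}_i|$, so summing over the partition yields $\tfrac{m}{\eta}\log(|\Lambda|T)+\tfrac{\eta}{8}T$, balanced by the prescribed $\eta$ so as not to dominate the black-box terms after the optimization over $m$. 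For (ii) I invoke the $L_F$-Lipschitzness of $F^t$ in $\|\cdot\|_{\infty,1}$ and a simulation lemma bounding $\|\mu^{\pi,p^t}-\mu^{\pi,\hat p^t}\|_{\infty,1}$ by the accumulated one-step transition errors $\sum_n\max_{x,a}\|\hat p^t_n(\cdot|x,a)-p^t_n(\cdot|x,a)\|_1$, uniformly in $\pi$, reducing everything to the quality of the estimator.

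The remaining ingredient, and the one I expect to be the real obstacle, is controlling those transition errors under non-stationarity. I would feed the $1$-exp-concavity of the smoothed log-loss of Eq.~\eqref{loss_p_estimation} into the second-layer sleeping-EWA guarantee $\text{Reg}^{\text{sleep}}\le\log T$ against every suffix expert, and then convert this cumulative logarithmic-loss regret into a per-round, high-probability $L_1$ bound on $\hat p^t_n(\cdot|x,a)-p^t_n(\cdot|x,a)$ through a KL/Pinsker argument; here the smoothing by $1/|\mathcal{X}|$ and the auxiliary draws $\tilde x$ are precisely what keep the loss bounded and the conversion valid, which is why the plain $-\log(\cdot)$ will not do. Since each suffix expert averages the samples of a window, its error splits into a bias governed by the within-window variation of $p$ and a variance governed by the window length, and the sleeping aggregation adapts to the best unknown window; summing over $t$ (and over the finitely many $(n,x,a)$, which only costs an MDP-dependent constant) reproduces exactly the non-stationary estimation rate $\min\{\sqrt{\Delta^p_\infty T},\,T^{2/3}(\Delta^p)^{1/3}\}$, matching the black-box term through the Lipschitz propagation of (ii). Two concentration events --- one for this estimator and one for relating sampled transitions to their expectations --- are combined by a union bound to yield the stated probability $1-2\delta$; collecting all terms, taking the better of the two partitions, and absorbing logarithmic factors into $\tilde O(\cdot)$ then gives the theorem.
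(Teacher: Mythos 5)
Your proposal is correct and follows essentially the same route as the paper: the identical meta/black-box decomposition over an analysis-only partition, the same three-way split of the meta term into sleeping-EWA regret plus two $\hat p^t$-estimation gaps, the same exp-concave smoothed-log-loss/Pinsker/bias-variance treatment of the second expert layer, and the same final optimization over $m$ and the geometric $\lambda$-grid. The only (welcome) difference is that you make the piecewise-stationary regime explicit via a second competing partition with $\Delta^p_{\mathcal{I}_i}=0$, which the paper's appendix proof leaves implicit when asserting the $\min\{\sqrt{T\Delta^p_\infty},\,T^{2/3}(\Delta^p)^{1/3}\}$ form.
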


\subsection{Meta-algorithm analysis}\label{subsec:regret_analysis}
Given the uncertainty in the probability transition, the meta regret term can be decomposed as follows:
\begin{equation}\label{meta_regret_decomposition}
\begin{split}
   R_{[T]}^{\text{meta}} &= \underbrace{\sum_{t=1}^T F^t(\mu^{\pi^t, p^t}) - F^t(\mu^{\pi^t, \hat{p}^t})}_{R_{[T]}^{\hat{p}}(\pi^t) \text{ ($\hat{p}^t$ estimation)}}\\
    &+ \underbrace{\sum_{i=1}^m \sum_{t \in \mathcal{I}_i} F^t(\mu^{\pi^t, \hat{p}^t}) - F^t(\mu^{\pi^{t,t_i, \lambda}, \hat{p}^t})}_{\text{sleeping LEA regret}
    } + \underbrace{\sum_{i=1}^m \sum_{t \in \mathcal{I}_i} F^t(\mu^{\pi^{t,t_i, \lambda}, \hat{p}^t}) - F^t(\mu^{\pi^{t,t_i, \lambda}, p^t})}_{\sum_{i=1}^m \sum_{t \in \mathcal{I}_i} R_{\mathcal{I}_i}^{\hat{p}} (\pi^{t,t_i, \lambda}) \text{ ($\hat{p}^t$ estimation})}.
\end{split}
\end{equation}

\textbf{Sleeping LEA regret.} Referring to Thm.~\ref{thm:ewa_convex} in Appendix~\ref{app:LEA}, using sleeping EWA as the sleeping expert subroutine of MetaCURL, with signals $I^{t,s} = 1$ for active experts ($s \leq t$), experts' convex losses $\ell^{t,s,\lambda} := F^t(\mu^{\pi^{t,s,\lambda}, \hat{p}^t})$, and learner loss $\hat{\ell}^t :=  F^t(\mu^{\pi^t, \hat{p}^t})$, yields, for any $m \in [T]$ and for any set of intervals $\mathcal{I}_i = [t_i, t_{i+1}-1]$, with $1 = t_1 < \ldots < t_{m+1} = T+1$, 
\begin{equation}\label{metaregret:EWA_convex}
\begin{split}
    \sum_{i=1}^m \sum_{t \in \mathcal{I}_i} F^t(\mu^{\pi^t, \hat{p}^t}) - F^t(\mu^{\pi^{t,t_i, \lambda}, \hat{p}^t}) &= \sum_{i=1}^m \text{Reg}_{\mathcal{I}_i}^{\text{sleep}}(t_i) \\
    &\leq \sum_{i=1}^m \sqrt{\frac{|\mathcal{I}_i|}{2}\log(T |\Lambda|)} \leq \sqrt{\frac{m T}{2} \log(T |\Lambda|)}.
\end{split}
\end{equation}

\textbf{$\hat{p}^t$ Estimation regret.} In a scenario without uncertainty in the MDP's probability transitions, the meta-algorithm's regret would simply be bounded by Eq.~\eqref{metaregret:EWA_convex}, the sleeping expert regret used as a subroutine. However, given the presence of uncertainty, the main challenge in analyzing the meta-regret comes from the regret terms associated with the estimator $\hat{p}^t$. We outline this analysis in Prop.~\ref{prop:mdp_est_regret_bound}.

\begin{proposition}\label{prop:mdp_est_regret_bound}
 Let $\delta \in (0,1)$, $C := \sqrt{\frac{1}{2} \log\big(\frac{N |\mathcal{X}| | \mathcal{A}| 2^{|\mathcal{X}|} T}{\delta} \big)}$, and $L_F$ be the Lipschitz constant of $F^t$, with respect to the norm $\|\cdot\|_{\infty,1}$, for all $t \in [T]$. With a probability of at least $1-\delta$, MetaCURL obtains
    \[
 R_{[T]}^{\hat{p}}(\pi^t) := \sum_{t=1}^T F^t(\mu^{\pi^t,p^t}) - F^t(\mu^{\pi^t,\hat{p}^t}) \leq 2 L_F N^2 |\mathcal{X}| \sqrt{3 |\mathcal{A}|} C^{2/3} \log(T)^{1/3} T^{2/3} (\Delta^p)^{1/3}.
    \]
    For any $m \in [T]$ and for any set of intervals $\mathcal{I}_i = [t_i, t_{i+1}-1]$, with $1 = t_1 < \ldots < t_{m+1} = T+1$, the same bound is valid for $\sum_{i=1}^m \sum_{t \in \mathcal{I}_i} R_{\mathcal{I}_i}^{\hat{p}} (\pi^{t,t_i, \lambda})$.
\end{proposition}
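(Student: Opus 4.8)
The plan is to reduce the proposition to a bound on the cumulative estimation error of the transition kernels, and then to control that error through the second layer of sleeping experts. Throughout, the two claimed sums are treated identically, since the reduction below is policy-independent and the intervals $\mathcal{I}_i$ partition $[T]$.

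\textbf{Step 1 (reduction via Lipschitzness and a simulation lemma).} Since each $F^t$ is $L_F$-Lipschitz in $\|\cdot\|_{\infty,1}$, each summand is at most $L_F\|\mu^{\pi^t,p^t}-\mu^{\pi^t,\hat p^t}\|_{\infty,1}$. I would then prove a simulation lemma bounding the occupancy-measure gap induced by a \emph{fixed} policy under two kernels. Writing $\nu_n(x):=\sum_a\mu_n(x,a)$ and using the recursion~\eqref{mu_induced_pi}, a one-step split gives $\|\nu_n^{p}-\nu_n^{\hat p}\|_1\le \sum_{x,a}\mu_{n-1}^{p}(x,a)\|p_n(\cdot|x,a)-\hat p_n(\cdot|x,a)\|_1+\|\nu_{n-1}^{p}-\nu_{n-1}^{\hat p}\|_1$, and since $\mu_0$ is shared, induction on $n$ yields $\|\mu^{\pi,p}-\mu^{\pi,\hat p}\|_{\infty,1}\le\sum_{n=1}^N\max_{x,a}\|p_n(\cdot|x,a)-\hat p_n(\cdot|x,a)\|_1$, with no dependence on $\pi$. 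Hence both quantities are at most $L_F\sum_{t=1}^T\sum_{n=1}^N\max_{x,a}\|p^t_n(\cdot|x,a)-\hat p^t_n(\cdot|x,a)\|_1$, and it remains to bound the per-triple estimation error $\sum_{t}\|p^t_n(\cdot|x,a)-\hat p^t_n(\cdot|x,a)\|_1$ uniformly over $(n,x,a)$.

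\textbf{Step 2 (from the smoothed log-loss to an $L_1$ bound).} Fix $(n,x,a)$, abbreviate $p^t:=p^t_n(\cdot|x,a)$ and $u:=1/|\mathcal{X}|$. The loss~\eqref{loss_p_estimation} is engineered so that it is bounded (range $O(\log|\mathcal{X}|)$) and $1$-exp-concave, which is exactly what delivers the anytime sleeping-EWA guarantee $\text{Reg}^{\text{sleep}}_{[s,\tau]}(s)\le\log T$ against every expanding-window empirical estimator $\hat p^{t,s}$. The key identity I would establish is that its conditional expected excess is a KL between the smoothed laws, $\mathbb{E}_t[\ell^t(q)]-\mathbb{E}_t[\ell^t(p^t)]=\mathrm{KL}\big(\tfrac{p^t+u}{2}\,\big\|\,\tfrac{q+u}{2}\big)$, minimized at $q=p^t$; Pinsker then gives $\mathrm{KL}\ge\tfrac18\|p^t-q\|_1^2$, so small smoothed log-loss certifies small $L_1$ error. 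Passing between the realized losses on which EWA operates and their conditional expectations is done with Azuma–Hoeffding using the loss range; together with a union bound over the $N|\mathcal{X}||\mathcal{A}|$ triples, the $T$ rounds, and the $2^{|\mathcal{X}|}$ subsets of $\mathcal{X}$ needed to express the $L_1$ distance, this is the origin of the constant $C$ and of the extra $\log T$.

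\textbf{Step 3 (bias--variance tradeoff, window optimization, assembly).} The error of a window estimator splits into a \emph{variance} part (concentration of an empirical law from $w$ samples, of order $\sqrt{|\mathcal{X}|\log(\cdot)/w}$ in $L_1$ after the subset union bound) and a \emph{bias} part from non-stationarity, bounded over a window of length $w$ by $\Delta^p_{\text{window}}$, which sums over $t$ into $w\Delta^p$. Balancing the cumulative variance $\propto T\,C\sqrt{|\mathcal{X}|\log T}/\sqrt{w}$ against the cumulative bias $\propto w\Delta^p$ selects $w\asymp (T\,C\sqrt{|\mathcal{X}|\log T}/\Delta^p)^{2/3}$ and produces the rate $C^{2/3}(\log T)^{1/3}T^{2/3}(\Delta^p)^{1/3}$ per triple; the EWA aggregation realizes this \emph{adaptively}, without knowing $\Delta^p$, paying only the $\log T$ regret. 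Summing over the $N$ steps of the simulation lemma and aggregating over state--action pairs then assembles the polynomial prefactor $2L_FN^2|\mathcal{X}|\sqrt{3|\mathcal{A}|}$.

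\textbf{Main obstacle.} The delicate part is turning the \emph{cumulative} log-loss sleeping-EWA guarantee into a \emph{direct} $L_1$ transition-estimation bound with the correct $T^{2/3}(\Delta^p)^{1/3}$ exponent. Care is required not to collapse the rate through a naive Cauchy--Schwarz passage from the squared KL/Pinsker bound (which would only yield an inferior $T^{3/4}$-type exponent); instead the variance--bias balance must be carried out at the $L_1$ scale against an oracle expanding window, with EWA accounting only for the adaptivity gap. Maintaining the high-probability control uniformly over all $(n,x,a)$, all rounds, and all subsets of $\mathcal{X}$ (the source of the $2^{|\mathcal{X}|}$ inside $C$) is the accompanying technical burden. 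The Lipschitz reduction, the simulation lemma, and the final bookkeeping of constants are by contrast routine.
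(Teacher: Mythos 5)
Your overall plan coincides with the paper's: reduce via Lipschitzness and a simulation lemma to the cumulative transition-estimation error (the paper's Lemma~\ref{lemma:bound_norm_mu} keeps the occupancy weights and later applies Cauchy--Schwarz over $(t,n,j,x,a)$, whereas you take a max over $(x,a)$ --- both work); smooth $p^t$ and $\hat p^t$ with the uniform distribution so that the excess of the loss~\eqref{loss_p_estimation} is exactly a KL between smoothed laws; use the $1$-exp-concave sleeping-EWA regret of $\log T$ against each expanding-window estimator to transfer the KL from the aggregate $\hat p^t$ to the oracle window estimators $\hat p^{t,t_i}$; and control the latter by a Hoeffding-plus-drift bias--variance bound with a union bound over $(n,x,a)$, rounds, and the $2^{|\mathcal X|}$ sign vectors (the source of $C$). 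Your explicit Azuma step for passing between realized and conditionally expected losses is a reasonable refinement of what the paper does implicitly.

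The one place you genuinely depart from the paper is your ``main obstacle'' paragraph, and there you have it backwards. You assert that a Cauchy--Schwarz passage from the squared-KL/Pinsker bound would collapse the rate to $T^{3/4}$ and that the bias--variance balance must instead be carried out ``at the $L_1$ scale.'' In fact the paper does exactly the Cauchy--Schwarz passage you reject: with the oracle partition into $m\asymp(T(\Delta^p)^2/(C^2\log T))^{1/3}$ intervals, the cumulative \emph{squared} $L_1$ error $\sum_t\|p_j^t(\cdot|x,a)-\hat p_j^t(\cdot|x,a)\|_1^2$ is of order $C^{4/3}(\log T)^{2/3}T^{1/3}(\Delta^p)^{2/3}$ (Eq.~\eqref{eq:result_l1_norm_p}), and multiplying by the $\sqrt{TN^2}$ factor from Cauchy--Schwarz gives precisely $T^{1/2}\cdot T^{1/6}(\Delta^p)^{1/3}=T^{2/3}(\Delta^p)^{1/3}$ --- no loss of exponent. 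More importantly, your proposed workaround is not available: the only handle on how close the EWA aggregate $\hat p^t$ is to the oracle window estimators is the exp-concave log-loss regret, which certifies a bound on the \emph{cumulative KL} (hence, via Pinsker, on the cumulative squared $L_1$ error), not on the cumulative $L_1$ error itself; converting the former into the latter unavoidably costs a Cauchy--Schwarz/Jensen step of exactly the kind you are trying to avoid. If you insisted on running the bias--variance balance directly in $L_1$ for $\hat p^t$, you would have no regret inequality to invoke for the adaptivity gap $\|\hat p^t-\hat p^{t,t_i}\|_1$ and the argument would stall. So: keep everything else, drop the prohibition on Cauchy--Schwarz, and the proof closes along the paper's lines. (Two minor bookkeeping points: with your max-over-$(x,a)$ simulation lemma the prefactor would not be the paper's $2L_FN^2|\mathcal X|\sqrt{3|\mathcal A|}$, which arises from Cauchy--Schwarz over the state-action sums; and the second claim of the proposition follows, as you note, because the $L_1$ estimation term is independent of the policy being propagated.)
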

\begin{proof}
The proof idea is based mainly on the formulation of $\hat{p}^t$ described in Subsection~\ref{subsec:estimatin_p_t}. We start by using the convexity of $F^t$ to linearize the expression, then we apply Holder's inequality and exploit the $L_F$-Lipschitz property of $F^t$ to establish an upper bound based on the $L_1$ norm difference of the state-action distributions induced by the true probability transition and the estimator. Using Lemma~\ref{lemma:bound_norm_mu} in Appendix~\ref{app:auxiliary_lemmas_regret}, we then obtain that
    \[
    R_{[T]}^{\hat{p}} \leq L_F \sum_{t=1}^T \sum_{n=1}^N \sum_{j=1}^{n} \sum_{x,a} \mu_{j-1}^{\pi^t,p^t}(x,a) \|p^t_j(\cdot|x,a) - \hat{p}^t_j(\cdot|x,a) \|_1.
    \]
To use the results from Subsection~\ref{subsec:estimatin_p_t}, we first regularize $p^t$ and $\hat{p}^t$, for each $(n,x,a)$, by averaging each with the uniform distribution over $\mathcal{X}$, that we denote by $p_0 := 1/|\mathcal{X}|$. As both probabilities are now lower bounded, we can employ Pinsker's inequality to convert the $L_1$ norm into a KL divergence. The sum over $t\in [T]$ of the KL divergence can then be decomposed as follows:
\begin{equation*}
\begin{split}
    &\sum_{t=1}^T \text{KL} \Big( \frac{p^t_j(\cdot|x,a) + p_0}{2}\Big| \frac{\hat{p}^t_j(\cdot|x,a) + p_0}{2} \Big) =  
    \sum_{i=1}^m \sum_{t \in \mathcal{I}_i} \text{KL} \Big( \frac{p^t_j(\cdot|x,a) + p_0}{2} \Big| \frac{\hat{p}^{t,t_i}_j(\cdot|x,a) + p_0}{2} \Big) \\
    &\quad \quad +  \sum_{i=1}^m \sum_{t \in \mathcal{I}_i} \mathbb{E}_{\tilde{x}_{j,x,a}^t} \big[ \log\big(\hat{p}^{t,t_i}_j(\tilde{x}_{j,x,a}^t|x,a) + p_0  \big) - \log\big( \hat{p}^{t}_j(\tilde{x}_{j,x,a}^t|x,a) + p_0 \big) \big],
\end{split}
\end{equation*}
where $\hat{p}_{j}^{t, t_i}(\cdot|x,a)$ is the empirical estimate of $p^t_j(\cdot|x,a)$ calculated with the observed data from $t_i$ to $t-1$, and the expectation is over $\smash{\tilde{x}_{j,x,a}^t \sim (p^t_j(\cdot|x,a) + p_0)/2}$. The second term is the cumulative regret of computing $\hat{p}^t$ using EWA with loss as in Eq.~\eqref{loss_p_estimation}, and is bounded by $m \log(T)$. We finish and give more details of the proof in Appendix~\ref{app:regret_p_KL_term}.
\end{proof}

Prop.~\ref{prop:mdp_est_regret_bound} together with Eq.~\eqref{metaregret:EWA_convex} yields the main result of this subsection:
\begin{proposition}[Meta regret bound]\label{prop:bound_meta_regret}
With the same assumptions as Prop.~\ref{prop:mdp_est_regret_bound}, for any $m \in [T]$, with probability at least $1-2 \delta$, 
    \[
    \smash{R_{[T]}^{\text{meta}} \leq 4 L_F N^2 |\mathcal{X}| \sqrt{3 |\mathcal{A}|} C^{2/3} \log(T)^{1/3} T^{2/3} (\Delta^p)^{1/3} + \sqrt{\frac{m T}{2} \log(T |\Lambda|)}.}
    \]
\end{proposition}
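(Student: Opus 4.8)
The plan is to treat Proposition~\ref{prop:bound_meta_regret} as a direct assembly of the three-term decomposition of $R_{[T]}^{\text{meta}}$ recorded in Eq.~\eqref{meta_regret_decomposition}: bound the sleeping-LEA term deterministically, control the two $\hat{p}^t$-estimation terms via Proposition~\ref{prop:mdp_est_regret_bound}, and merge the resulting high-probability events with a union bound. Since every sharp estimate has already been established upstream, the proof is essentially bookkeeping; the only genuine content is tracking which pieces are deterministic and which must be charged to the failure probability. Concretely, I would start from
\[
R_{[T]}^{\text{meta}} = R_{[T]}^{\hat{p}}(\pi^t) + \Big(\sum_{i=1}^m \text{Reg}_{\mathcal{I}_i}^{\text{sleep}}(t_i)\Big) + \sum_{i=1}^m \sum_{t \in \mathcal{I}_i} R_{\mathcal{I}_i}^{\hat{p}}(\pi^{t,t_i,\lambda}),
\]
which holds for any $m \in [T]$ and any partition of $[T]$ into intervals $\mathcal{I}_i = [t_i, t_{i+1}-1]$ with $1 = t_1 < \ldots < t_{m+1} = T+1$.

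For the middle (sleeping-LEA) term I would invoke Eq.~\eqref{metaregret:EWA_convex}: each per-interval regret $\text{Reg}_{\mathcal{I}_i}^{\text{sleep}}(t_i)$ is at most $\sqrt{(|\mathcal{I}_i|/2)\log(T|\Lambda|)}$, and because the intervals partition $[T]$ we have $\sum_{i=1}^m |\mathcal{I}_i| = T$, so Cauchy--Schwarz gives $\sum_{i=1}^m \sqrt{|\mathcal{I}_i|} \le \sqrt{m \sum_i |\mathcal{I}_i|} = \sqrt{mT}$. This yields the deterministic contribution $\sqrt{(mT/2)\log(T|\Lambda|)}$, exactly the second summand of the claim, with no probabilistic cost.

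For the two remaining terms I would apply Proposition~\ref{prop:mdp_est_regret_bound} once to $R_{[T]}^{\hat{p}}(\pi^t)$ and once to $\sum_{i}\sum_{t \in \mathcal{I}_i} R_{\mathcal{I}_i}^{\hat{p}}(\pi^{t,t_i,\lambda})$; the proposition explicitly grants the common bound $2 L_F N^2 |\mathcal{X}| \sqrt{3|\mathcal{A}|}\, C^{2/3}\log(T)^{1/3} T^{2/3}(\Delta^p)^{1/3}$ to each, each on an event of probability at least $1-\delta$. A union bound over these two events leaves probability at least $1-2\delta$ on which both estimates hold simultaneously, and summing the two identical bounds produces the factor $4$ in the stated expression. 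Adding the deterministic LEA contribution then gives precisely the bound of Proposition~\ref{prop:bound_meta_regret}.

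The one point demanding care — and the main (mild) obstacle — is the probabilistic accounting: one must decide whether the learner-policy term $R_{[T]}^{\hat{p}}(\pi^t)$ and the black-box term are controlled on a common good event or on two separate ones. Since both depend on the \emph{same} estimator $\hat{p}^t$ only through a policy-weighted $L_1$ error between $p^t$ and $\hat{p}^t$, they could in principle be bounded on a single concentration event; charging them separately and taking the union bound is the conservative choice that yields the stated $1-2\delta$ without assuming the two policy sequences couple the underlying randomness favourably. Everything else is direct substitution of the upstream results.
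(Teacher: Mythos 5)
Your proposal is correct and follows essentially the same route as the paper: the paper's proof of this proposition is exactly the assembly of the decomposition in Eq.~\eqref{meta_regret_decomposition}, the sleeping-EWA bound of Eq.~\eqref{metaregret:EWA_convex} (including the Cauchy--Schwarz step $\sum_i\sqrt{|\mathcal{I}_i|}\le\sqrt{mT}$), and two applications of Prop.~\ref{prop:mdp_est_regret_bound} combined by a union bound, giving the factor $4$ and the $1-2\delta$ confidence level. Your remark on the probabilistic accounting is apt — both $\hat{p}^t$-estimation terms rest on the same concentration event, so the conservative union bound you (and the paper) take is what produces $1-2\delta$ rather than $1-\delta$.
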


\subsection{Black-box algorithm analysis}\label{subsec:black-box_algo}
Assuming $\mathcal{E}$ is a parametric black-box algorithm with dynamic regret satisfying Eq.~\eqref{blackbox_regret_near_stat} for any learning rate $\lambda >0$, we only need to address the selection of the $\lambda$s grid and optimize across $\lambda$ to achieve our final bound on $R_{[T]}^{\text{black-box}}$.

\textbf{Learning rate grid.} The dynamic regret of Eq.~\eqref{blackbox_regret_near_stat} implies that any two $\lambda$ that are a constant factor of each other will guarantee the same upper-bound up to essentially the same constant factor. We therefore choose an exponentially spaced grid 
\begin{equation}\label{learning_rate_grid}
{\Lambda := \big\{2^{-j} | j=0,1,2,\ldots, \lceil \log_2(T)/2 \rceil \big\}}.    
\end{equation}
The meta-algorithm aggregation scheme guarantees that the learner performs as well as the best empirical learning rate. We thus obtain a bound on $R_{[T]}^{\text{black-box}}$, with its proof in Appendix~\ref{ap:blackbox_regret_analysis}:
\begin{proposition}[Black-box regret bound]\label{corollary:final_blackbox_regret}
    Assume MetaCURL is played with a black-box algorithm satisfying dynamic regret as in Eq.~\eqref{blackbox_regret_near_stat}, with learning rate grid as in Eq.~\eqref{learning_rate_grid}. Hence, for any sequence of policies $(\pi^{t,*})_{t \in [T]}$, 
    \vspace{0.1cm}
    \[
      \smash{R_{[T]}^{\text{black-box}}\big((\pi^{t,*})_{t \in [T]} \big) \leq   N \Big(\frac{c_2 \Delta^{\pi^*} + c_3}{c_1} \Big) + c_1 \sqrt{T} +  3  \sqrt{c_1(c_2 \Delta^{\pi^*} + c_3 m) T} +\frac{T \Delta^p}{m}.  }
    \]
    
\end{proposition}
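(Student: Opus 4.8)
The plan is to bound the black-box regret interval by interval using the assumed guarantee~\eqref{blackbox_regret_near_stat}, sum over the $m$ intervals, collapse the sums with the defining properties of the non-stationarity measures, and finally optimize the learning rate over the discrete grid. Fix any $\lambda \in \Lambda$ and any partition $\mathcal{I}_1, \ldots, \mathcal{I}_m$ of $[T]$ chosen as in~\eqref{near_stat_MDP}, so that $\Delta^p_{\mathcal{I}_i} \le \Delta^p/m$. Since from~\eqref{eq:main_regret_decomp} the black-box regret on $\mathcal{I}_i$ is exactly a standard regret over the first $|\mathcal{I}_i|$ steps of the instance $\mathcal{E}^{t_i,\lambda}$, I apply~\eqref{blackbox_regret_near_stat} with $I=\mathcal{I}_i$ to obtain
\[
R^{\text{black-box}}_{[T]} \le \sum_{i=1}^m \Big( c_1 \lambda |\mathcal{I}_i| + \lambda^{-1}\big(c_2 \Delta^{\pi^*}_{\mathcal{I}_i} + c_3\big) + |\mathcal{I}_i|\, \Delta^p_{\mathcal{I}_i} \Big).
\]
The three elementary facts $\sum_i |\mathcal{I}_i| = T$, $\sum_i \Delta^{\pi^*}_{\mathcal{I}_i} = \sum_{t=1}^{T-1} \Delta^{\pi^*}_t \le \Delta^{\pi^*}$ (from~\eqref{eq:measures_pi}), and $\max_i \Delta^p_{\mathcal{I}_i} \le \Delta^p/m$ then collapse this into
\[
R^{\text{black-box}}_{[T]} \le c_1 \lambda T + \lambda^{-1}\big(c_2 \Delta^{\pi^*} + c_3 m\big) + \frac{T \Delta^p}{m},
\]
valid for every $\lambda \in \Lambda$. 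The last term already matches the target $T\Delta^p/m$, so it only remains to minimize $h(\lambda) := c_1 \lambda T + \lambda^{-1}(c_2 \Delta^{\pi^*} + c_3 m)$ over the grid.

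Because the decomposition~\eqref{eq:main_regret_decomp} holds for each fixed $\lambda$ and the meta-regret bound of Prop.~\ref{prop:bound_meta_regret} is uniform in $\lambda$, I am free to select the grid point minimizing $h$ after the fact. The unconstrained minimizer is $\lambda^\star = \sqrt{(c_2 \Delta^{\pi^*} + c_3 m)/(c_1 T)}$ with $h(\lambda^\star) = 2\sqrt{c_1 T (c_2 \Delta^{\pi^*} + c_3 m)}$. Since $\Lambda$ is geometric with ratio $2$ (see~\eqref{learning_rate_grid}), whenever $\lambda^\star$ lies in the grid range $[\lambda_{\min}, 1]$, with $\lambda_{\min} := 2^{-\lceil \log_2(T)/2 \rceil} \le T^{-1/2}$, there is a grid point with $\lambda^\star/2 \le \lambda \le \lambda^\star$. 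Bounding $c_1 \lambda T \le c_1 \lambda^\star T$ and $\lambda^{-1}(c_2 \Delta^{\pi^*} + c_3 m) \le 2 (\lambda^\star)^{-1}(c_2 \Delta^{\pi^*} + c_3 m)$ then yields $h(\lambda) \le 3\sqrt{c_1 T (c_2 \Delta^{\pi^*} + c_3 m)}$, the main term of the claimed bound.

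It remains to treat the two boundary regimes, which is the delicate part of the argument. If $\lambda^\star < \lambda_{\min}$, I take $\lambda = \lambda_{\min}$: since $h$ is increasing on $[\lambda_{\min}, 1]$, we have $c_1 \lambda_{\min} T \le c_1 \sqrt{T}$, while $\lambda_{\min}^{-1}(c_2 \Delta^{\pi^*} + c_3 m) \le (\lambda^\star)^{-1}(c_2 \Delta^{\pi^*} + c_3 m) = \sqrt{c_1 T (c_2 \Delta^{\pi^*} + c_3 m)}$, so this case is absorbed into $c_1 \sqrt{T} + 3\sqrt{\cdots}$. The genuinely constrained case is $\lambda^\star > 1$, where the grid cannot supply a large enough learning rate and the factor-$2$ argument breaks down; here one falls back on a direct estimate. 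Using $\lambda = 1$ gives $h(1) = c_1 T + (c_2 \Delta^{\pi^*} + c_3 m)$, where the first summand is dominated by the main $3\sqrt{\cdots}$ term (legitimate because $\lambda^\star > 1$ forces $c_2 \Delta^{\pi^*} + c_3 m > c_1 T$), and the residual is controlled via the range of the objectives ($F^t \in [0,N]$, i.e. the $L_F$-Lipschitz/range-$N$ structure), producing the additive correction $N(c_2 \Delta^{\pi^*} + c_3)/c_1$. Combining the three regimes gives the stated bound. I expect this upper-boundary case — where the learning-rate grid is provably too coarse and a loss-range argument must replace the parametric guarantee — to be the main obstacle; the two corrections $c_1 \sqrt{T}$ and $N(c_2 \Delta^{\pi^*} + c_3)/c_1$ it produces are lower order and, using $\Delta^{\pi^*} \le T$, are both dominated by $\sqrt{\Delta^{\pi^*} T}$ in the final Theorem~\ref{thm:main_theorem}.
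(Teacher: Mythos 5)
Your proposal follows essentially the same route as the paper's proof in Appendix E: sum the per-interval guarantee~\eqref{blackbox_regret_near_stat} over the partition to get $c_1\lambda T + \lambda^{-1}(c_2\Delta^{\pi^*}+c_3 m) + T\Delta^p/m$, then round the unconstrained optimizer onto the geometric grid, splitting into the three regimes $\lambda^\star<1/\sqrt{T}$, $\lambda^\star\in[1/\sqrt{T},1]$, and $\lambda^\star>1$, with the last regime handled by the trivial range bound $R^{\text{black-box}}_{[T]}\le NT$. The one loose point is that upper-boundary case: as you have set it up (with $\lambda^\star=\sqrt{(c_2\Delta^{\pi^*}+c_3m)/(c_1T)}$), the condition $\lambda^\star>1$ only gives $T<(c_2\Delta^{\pi^*}+c_3m)/c_1$, so the range argument yields the additive term $N(c_2\Delta^{\pi^*}+c_3m)/c_1$ rather than the stated $N(c_2\Delta^{\pi^*}+c_3)/c_1$, and your alternative of keeping $h(1)$ and "controlling the residual $c_2\Delta^{\pi^*}+c_3m$ by the loss range" does not go through as written. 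The paper sidesteps this by defining $\lambda^*$ \emph{without} the factor $m$ on $c_3$ (so that $\lambda^*\ge 1$ gives exactly $T\le(c_2\Delta^{\pi^*}+c_3)/c_1$), at the cost of an inconsistency of its own between the summed bound and the optimizer it rounds; this is a bookkeeping wrinkle in the source rather than a flaw in your strategy, and either resolution is harmless for Theorem~\ref{thm:main_theorem}.
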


\textbf{Greedy MD-CURL.} Greedy MD-CURL, developed by \cite{moreno2023efficient}, is a computationally efficient policy-optimization algorithm known for achieving sublinear static regret in online CURL with adversarial objective functions within a stationary MDP. In Thm.~\ref{thm:dynamic_regret_md_curl} of Appendix~\ref{app:greedy_md_curl}, we extend this analysis showing that Greedy MD-CURL also achieves dynamic regret as in Eq.~\eqref{blackbox_regret_near_stat}. To our knowledge, this is the first dynamic regret result for a CURL algorithm. Hence, Greedy MD-CURL can be used as a black-box baseline for MetaCURL. We detail Greedy MD-CURL in Alg.~\ref{alg:greedy_MD} in Appendix~\ref{app:greedy_md_curl}.


\section{Conclusion, discussion, and future work}\label{sec:conclusion}
In this paper, we present MetaCURL, the first algorithm for dealing with non-stationarity in CURL, a setting covering many problems in the literature that modifies the standard linear RL configuration, making typical RL techniques difficult to use. We also employ a novel approach to deal with non-stationarity in MDPs using the learning with expert advice framework from the online learning literature. The main difficulty in analyzing this method arises from uncertainty about probability transitions. We overcome this problem by employing a second expert scheme, and show that MetaCURL achieves near-optimal regret.


Compared to the RL literature, our approach is more efficient, deals with adversarial losses, and has a better regret dependency concerning the varying losses, but to do so, we need to simplify the assumptions about the dynamics (all uncertainty comes only from the external noise, that is independent of the agent's state-action). There seems to be a trade-off in RL: all algorithms dealing with both non-stationarity and full exploration use UCRL-type approaches, and are thus computationally expensive. We thus leave a question for future work:
\textit{How can we effectively manage non-stationarity and adversarial losses using efficient algorithms, all while addressing full exploration? }

\newpage
\bibliography{bib}

\newpage
\appendix

\section{Learning with expert advice}\label{app:LEA}

In this section, we take a closer look at the Learning with Expert Advice (LEA) framework. We start by presenting, in Subsection~\ref{subsec:subsleeping_LEA}, a general reduction of the sleeping experts framework to the standard framework. Thus, any LEA algorithm can be used as a sub-routine for MetaCURL. In Section~\ref{sec:regret_analysis} of the main paper, we show a regret bound for MetaCURL using the Exponentially Weighted Average Forecaster (EWA) algorithm \cite{Cesa-Bianchi_Lugosi_2006}, also known as Hedge. In Subsection~\ref{subsec:EWA} we present the main results of playing EWA with convex and exp-concave losses.

\textbf{Setting.} We recall the general setting of learning with expert advice (LEA) as presented in the main paper: a learner makes sequential online predictions $u^1, \ldots, u^T$ in a decision space $\mathcal{U}$, over a series of $T$ episodes, with the help of $K$ experts. For each round $t$, each expert $k$ makes a prediction $u^{t,k}$, and the learner combines the experts' predictions by computing a vector $v^t := (v^{t,1}, \ldots, v^{t,K}) \in \mathcal{S}_K$, and predicting $u^t := \sum_{k=1}^K v^{t,k} u^{t,k}$. The environment then reveals a convex loss function $\ell^t : \mathcal{U} \rightarrow \mathbb{R}$. Each expert suffers a loss $\ell^{t,k} := \ell^t(u^{t,k})$, and the learner suffers a loss $\hat{\ell}^t := \ell^t(u^t)$. The learner's objective is to keep the cumulative regret with respect to each expert as low as possible. For each expert $k$, this quantity is defined as $\text{Reg}_{[T]}(k) := \sum_{t=1}^T \hat{\ell}^t - \ell^{t,k}$. 

\subsection{Sleeping experts}\label{subsec:subsleeping_LEA}
The sleeping expert problem \cite{sleeping_expert_2, sleeping_experts_1} is a LEA framework where experts are not required to provide solutions at every time step. Let $I^{t,k} \in \{0,1\}$ define a binary signal that equals $1$ if expert $k$ is active at episode $t$ and $0$ otherwise. The algorithm knows $(I^{t,k})_{k \in [K]}$ and assigns a zero weight to sleeping experts. We would like to have a guarantee with respect to expert $k \in [K]$ but only when it is active. Hence, we now aim to bound a cumulative regret that depends on the signal $I^{t,k}$: $\text{Reg}_{[T]}^{\text{sleep}}(k) := \sum_{t=1}^T I^{t,k} (\hat{\ell}^t - \ell^{t,k})$. We present a generic reduction from the sleeping expert framework to the standard LEA framework \cite{sleeping_reduction_1, sleeping_reduction_2}:

Let, for all episodes $t \in [T]$, 
\[
\hat{u}^t := \frac{\sum_{k=1}^K I^{t,k} v^{t,k} u^{t,k}}{\sum_{k=1}^K I^{t,k} v^{t,k}}.
\]
We play a standard LEA algorithm with modified outputs where, at episode $t$, expert $k$ outputs
\begin{align*}\label{expert_strategy_sleep}
\tilde{u}^{t,k} :=
\begin{cases}     
u^{t,k}, &\text{ if $k$ is active at episode $t$} \\
\hat{u}^t, &\text{ if not}.
\end{cases}
\end{align*}

A standard LEA algorithm gives an upper bound on the regret $\text{Reg}_T(k)$ with respect to each expert $k$. Using that $\sum_{k=1}^K \tilde{u}^{t,k} v^{t,k} = \hat{u}^t$, we obtain that
\begin{equation*}
    \begin{split}
    \text{Reg}_{[T]}(k) &:= \sum_{t=1}^T \ell^t\bigg(\sum_{k=1}^K \tilde{u}^{t,k} v^{t,k} \bigg) - \ell^t(\tilde{u}^{t,k}) \\
         &= \sum_{t=1}^T \ell^t(\hat{u}^t) - \ell^t(\tilde{u}^{t,k}) \\
        &= \sum_{t=1}^T I^{t,k} \big( \ell^t(\hat{u}^t) - \ell^t(u^{t,k}) \big) \\
    &=: \text{Reg}_{[T]}^{\text{sleep}}(k). 
    \end{split}
\end{equation*}

Consequently, the cumulative regret with respect to each expert during the times it is active is upper bounded by the standard regret of playing a LEA algorithm with the modified outputs.

\subsection{Exponentially Weighted Average forecaster}\label{subsec:EWA}
The exponentially weighted average forecaster (EWA), also called Hedge, is a LEA algorithm that chooses a weight that decreases exponentially fast with past errors. We present EWA in Alg.~\ref{alg:EWA}.  

\begin{algorithm}
    \caption{EWA (Exponentially Weighted Average)}\label{alg:EWA}
    \begin{algorithmic}
        \STATE {\bfseries Input:} $[K] := \{1, \ldots, K\}$ a finite set of experts, $v^0$ a prior over $[K]$, a learning rate $\eta > 0$ 
        \FOR{$t \in \{1, \ldots, T\}$}
        \STATE Observe loss function $\ell^t$, compute the loss suffered by each expert $\ell^{t,k} := \ell^t(u^{t,k})$ and suffer loss $\hat{\ell}^t := \ell\Big(\sum_{k=1}^K v^{t,k} u^{t,k} \Big)$ 
        \STATE Update for all $k \in [K]:$
        \STATE \[
        v^{t+1,k} = \frac{v^{t,k} \exp{(-\eta \ell^{t,k})}}{\sum_{k'=1}^K v^{t,k'} \exp{(-\eta \ell^{t,k'})}}
        \]
        \ENDFOR
    \end{algorithmic}
\end{algorithm}

We recall two results of playing EWA with convex losses, and with exp-concave losses, used in the main paper:

\begin{theorem}[EWA with convex losses: Corollary $2.2$ from \cite{Cesa-Bianchi_Lugosi_2006}]\label{thm:ewa_convex}
    If the $\ell^t$ losses are convex and take value in $[0,1]$, then the regret of the learner playing EWA with any $\eta > 0$ satisfies, for any $k \in [K]$,
    \[
    \text{Reg}_{[T]}(k) \leq \frac{\log(K)}{\eta} + \frac{T \eta}{8}.
    \]
    In particular, with $\eta = \sqrt{8 \log(K)/T}$, the upper bound becomes $\sqrt{(T/2) \log(K)}$.
\end{theorem}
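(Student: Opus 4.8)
The plan is to follow the classical potential-function argument for exponential weighting, with the single place where convexity of the losses enters being a Jensen step at the very end. I introduce the unnormalized weights $w^{t,k} := v^{1,k}\exp(-\eta L^{t-1,k})$ with cumulative expert losses $L^{t-1,k} := \sum_{s<t}\ell^{s,k}$ and the partition function $W^t := \sum_{k=1}^K w^{t,k}$, so that the EWA weights of Alg.~\ref{alg:EWA} are exactly $v^{t,k} = w^{t,k}/W^t$ when the prior is uniform, $v^{1,k} = 1/K$. The whole argument tracks the quantity $\log(W^{T+1}/W^1)$ from two sides, noting that $W^1 = 1$ since $L^{0,k}=0$ and the weights sum to one.

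For the lower bound, I drop all but one term in $W^{T+1} = \sum_k v^{1,k}\exp(-\eta L^{T,k})$ to get, for any fixed expert $k$,
\[
\log\frac{W^{T+1}}{W^1} \ge \log v^{1,k} - \eta L^{T,k} = -\log K - \eta\sum_{t=1}^T \ell^{t,k}.
\]
For the upper bound, I write the one-step increment as $\log(W^{t+1}/W^t) = \log\sum_k v^{t,k}\exp(-\eta\ell^{t,k})$ and apply Hoeffding's lemma to the random variable taking value $\ell^{t,k}\in[0,1]$ with probability $v^{t,k}$, giving $\log\sum_k v^{t,k}\exp(-\eta\ell^{t,k}) \le -\eta\sum_k v^{t,k}\ell^{t,k} + \eta^2/8$. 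Summing over $t$ telescopes the left side into $\log(W^{T+1}/W^1) \le -\eta\sum_{t=1}^T\sum_k v^{t,k}\ell^{t,k} + T\eta^2/8$.

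Combining the two bounds and dividing by $\eta$ yields the mixture bound
\[
\sum_{t=1}^T\sum_k v^{t,k}\ell^{t,k} - \sum_{t=1}^T\ell^{t,k} \le \frac{\log K}{\eta} + \frac{T\eta}{8}.
\]
The final step is to replace the mixture loss $\sum_k v^{t,k}\ell^{t,k} = \sum_k v^{t,k}\ell^t(u^{t,k})$ by the learner's actual loss $\hat{\ell}^t = \ell^t(\sum_k v^{t,k}u^{t,k})$: since $\ell^t$ is convex and $v^t\in\mathcal{S}_K$, Jensen's inequality gives $\hat{\ell}^t \le \sum_k v^{t,k}\ell^{t,k}$, so $\text{Reg}_{[T]}(k) = \sum_t(\hat{\ell}^t - \ell^{t,k})$ inherits the same bound. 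The $\eta = \sqrt{8\log(K)/T}$ specialization then balances the two terms, each becoming $\sqrt{(T\log K)/8}$, whose sum is $\sqrt{(T/2)\log K}$.

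I expect the only subtle point to be Hoeffding's lemma for the log-moment generating function, namely $\log\mathbb{E}[e^{sX}]\le s\,\mathbb{E}[X]+s^2(b-a)^2/8$ for $X\in[a,b]$, which is the engine producing the $T\eta^2/8$ term and is exactly where the range assumption $\ell^t\in[0,1]$ is used; everything else is telescoping bookkeeping plus the one convexity step (Jensen) that transfers the mixture bound to the learner's loss.
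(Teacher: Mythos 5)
Your proof is correct: the potential-function argument with Hoeffding's lemma on the log-partition increments, followed by a single Jensen step to pass from the weighted mixture loss to the learner's loss, is exactly the classical proof of this result. The paper itself does not reprove the theorem but cites it as Corollary 2.2 of Cesa-Bianchi and Lugosi, and your reconstruction matches that cited argument, including the correct balancing of $\log(K)/\eta$ and $T\eta/8$ at $\eta = \sqrt{8\log(K)/T}$ to give $\sqrt{(T/2)\log(K)}$.
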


\begin{theorem}[EWA with exp-concave losses: Thm. $3.2$ from \cite{Cesa-Bianchi_Lugosi_2006}]\label{thm:ewa_exp_concave}
    If the $\ell^t$ losses are $\eta$-exp concave, then the regret of the learner playing EWA (with the same value of $\eta$) satisfies, for any $k \in [K]$,
    \[
    \text{Reg}_{[T]}(k) \leq \frac{\log(K)}{\eta}.
    \]
\end{theorem}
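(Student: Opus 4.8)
The plan is to use the classical potential-function (telescoping) argument specialized to exp-concave losses. I would introduce the cumulative loss of each expert, $L_t^k := \sum_{s=1}^t \ell^{s,k}$, and the potential $\Phi_t := \eta^{-1}\log\big(\sum_{k=1}^K v^{1,k}\exp(-\eta L_t^k)\big)$, taking the uniform prior $v^{1,k} = 1/K$. First I would record the defining identity of EWA: unrolling the multiplicative update shows that the weight actually played at round $t$ is exactly the normalized exponential posterior, $v^{t,k} = v^{1,k}\exp(-\eta L_{t-1}^k)\big/\sum_{k'}v^{1,k'}\exp(-\eta L_{t-1}^{k'})$. This identity lets me rewrite the one-step potential increment cleanly as $\Phi_t - \Phi_{t-1} = \eta^{-1}\log\big(\sum_k v^{t,k}\exp(-\eta \ell^{t,k})\big)$.

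The decisive step is to control this increment using exp-concavity. Since $\ell^t$ is $\eta$-exp-concave, the map $u \mapsto \exp(-\eta\ell^t(u))$ is concave, so Jensen's inequality applied to the learner's own convex combination $u^t = \sum_k v^{t,k}u^{t,k}$ yields $\sum_k v^{t,k}\exp(-\eta\ell^t(u^{t,k})) \leq \exp(-\eta\ell^t(u^t)) = \exp(-\eta\hat{\ell}^t)$. Substituting into the increment gives the clean per-round bound $\Phi_t - \Phi_{t-1} \leq -\hat{\ell}^t$, with no quadratic remainder term.

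I would then telescope over $t$. Using $\Phi_0 = 0$ gives the upper bound $\Phi_T \leq -\sum_{t=1}^T \hat{\ell}^t$. On the other hand, discarding all but the $k$-th summand inside the log-sum-exp produces the lower bound $\Phi_T \geq -L_T^k - \eta^{-1}\log K$, valid for every fixed expert $k$. Combining the two inequalities and rearranging yields $\sum_t \hat{\ell}^t - \sum_t \ell^{t,k} \leq \eta^{-1}\log K$, which is precisely $\text{Reg}_{[T]}(k) \leq \log(K)/\eta$.

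The main obstacle — really the single place where this argument departs from the convex-loss case of Thm.~\ref{thm:ewa_convex} — is exactly the increment bound. For merely convex losses one can only apply Jensen to $\ell^t$ itself and must then invoke Hoeffding's lemma on the log-moment-generating function of the bounded loss, which is what produces the extra $\eta T/8$ penalty. Exp-concavity is precisely the structural hypothesis that collapses that two-step estimate into a single Jensen step, eliminating the quadratic term and leaving the horizon-free bound $\log(K)/\eta$. I would take care to invoke exp-concavity at the level of the decision variable $u$, so that the concavity is exploited against the learner's prediction $u^t = \sum_k v^{t,k}u^{t,k}$ rather than against the scalar losses $\ell^{t,k}$.
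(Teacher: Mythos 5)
Your proof is correct: the potential $\Phi_t = \eta^{-1}\log\big(\sum_k v^{1,k}\exp(-\eta L_t^k)\big)$, the identity $\Phi_t - \Phi_{t-1} = \eta^{-1}\log\big(\sum_k v^{t,k}\exp(-\eta\ell^{t,k})\big)$, the single Jensen step via concavity of $u \mapsto \exp(-\eta\ell^t(u))$, and the telescoping with the log-sum-exp lower bound together give exactly $\mathrm{Reg}_{[T]}(k) \leq \log(K)/\eta$. The paper does not prove this statement itself (it imports it as Theorem 3.2 of Cesa-Bianchi--Lugosi), and your argument is precisely the standard potential-function proof from that source, including the correct observation that exp-concavity must be exploited at the level of the prediction $u^t$ rather than the scalar losses.
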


\newpage

\section{Algorithm for the online estimation of the probability kernel ($\hat{p}^t$ estimator)}\label{app:estimation_p}

\begin{algorithm}
    \caption{Online estimation of the probability kernel ($\hat{p}^t$ estimator)}\label{alg:estimation_p_tilde}
    \begin{algorithmic}[1]
        \FOR{$t \in \{1, \ldots, T\}$}
            \STATE Get agent's external noise trajectory $(\varepsilon_n^t)_{n \in [N]}$ from MetaCURL
            \FOR{$(n,x,a) \in [N] \times \mathcal{X} \times \mathcal{A}$}
                \STATE Compute $x_{n,x,a}^t = g_{n-1}(x,a,\varepsilon_{n-1}^t)$
                \STATE Update the empirical estimations for $s < t$ and $x' \in \mathcal{X}$:
                \[
                \smash{\textstyle{\hat{p}^{t,s}_n(x'|x,a) = \frac{\mathds{1}_{\{x_{n,x,a}^t = x'\}}}{t-s} + \frac{(t-1-s)}{t-s} \hat{p}_n^{t-1,s}(x'|x,a)}}\]
                \STATE Initialize a new estimator, $\hat{p}^{t,t}_n(x'|x,a) = \mathds{1}_{\{x_{n,x,a}^t = x'\}}$ for all $x' \in \mathcal{X}$, assign it a new weight vector $v_{n,x,a}^{t,t} = \frac{1}{t}$, and normalize weight vectors $v^{t,s}_{n,x,a}$ for $s \in [t-1]$ such that $v^t_{n,x,a} := (v^{t,s}_{n,x,a})_{s \leq t, \lambda \in \Lambda}$ is a probability vector in $\mathbb{R}^{t}$
                \STATE Simulate a sample $\tilde{x}_{n,x,a}^t$ from distribution $\big(p_n^t(\cdot|x,a) + \frac{1}{|\mathcal{X}|}\big)/2$:
                \begin{align*}
                \tilde{x}^t_{n,x,a} = 
                    \begin{cases}
                        &x_{n,x,a}^t, \text{ with probability $1/2$}, \\
                        &\text{Uniformly over $\mathcal{X}$, with probability $1/2$},
                    \end{cases}
                \end{align*}
                and use it to build the loss function
                \begin{equation*}
                    \ell^t(q) := \sum_{x \in \mathcal{X}} - \log\Big(q(x) + \frac{1}{|\mathcal{X}|} \Big) \mathds{1}_{\{\tilde{x}^t_{n,x,a} = x \}}
\end{equation*}
                \STATE Update weights using EWA with loss $\ell^t$: for all $s \leq t$,
                \[
                v_{n,x,a}^{t+1,s} = \frac{\hat{v}_{n,x,a}^{t,s} \exp\big(-\ell^t(\hat{p}_n^{t,s}(\cdot|x,a))\big)}{\sum_{s'=1}^{t} \hat{v}_{n,x,a}^{t,s'} \exp\big(-\ell^t(\hat{p}_n^{t,s'}(\cdot|x,a))\big)} \quad \quad \text{(EWA update)}
                \]
                \STATE Compute $\hat{p}^{t+1}_n(\cdot|x,a) = \sum_{s=1}^t v_{n,x,a}^{t+1,s} \hat{p}_n^{t,s}(\cdot|x,a)$ 
            \ENDFOR
            \STATE Issue $\hat{p}^{t+1}$ to MetaCURL (line~\ref{algo_line_p_est} of Alg.~\ref{alg:meta})
        \ENDFOR
    \end{algorithmic}
\end{algorithm}

\section{Auxiliary lemmas}\label{app:auxiliary_lemmas_regret}

We start with some auxiliary results. For $t \in I := [t_s+1, t_e] \subseteq [T]$, we define the average probability distribution for all $n$ and  $(x,a)$ as
\[
\overline{p}^t(x'|x,a) = \frac{1}{t-t_s} \sum_{s=t_s}^{t-1} p_n^s(x'|x,a).
\]

\begin{lemma}\label{lemma:aux_proba_1}
Let $\hat{p}^{t,t_s}$ be the empirical probability transition kernel computed with data from episodes $[t_s, t-1]$. For any $\delta \in (0,1)$, with probability $1-\delta$, 
    \[
    \|\hat{p}_n^{t,t_s}(\cdot|x,a) - \overline{p}_n^t(\cdot|x,a) \|_1 \leq \sqrt{\frac{1}{2 (t-t_s)} \log\bigg(\frac{N |\mathcal{X}| | \mathcal{A}| 2^{|\mathcal{X}|} T}{\delta} \bigg)},
    \]
    simultaneously for all $n \in [N]$, $(x,a) \in \mathcal{X} \times \mathcal{A}$, $t_s \in [T-1]$, and $t \in [t_s+1,T]$.
\end{lemma}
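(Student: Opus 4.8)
The statement to prove is Lemma~\ref{lemma:aux_proba_1}, a high-probability concentration bound controlling the deviation between the empirical transition kernel $\hat{p}_n^{t,t_s}(\cdot|x,a)$ and the time-averaged kernel $\overline{p}_n^t(\cdot|x,a)$, uniformly over all indices $n, x, a$ and all time pairs $(t_s, t)$. The plan is to fix a single tuple $(n,x,a,t_s,t)$ and first establish the bound pointwise, then lift it to the uniform claim by a union bound, which is precisely why the logarithmic term inside the square root carries the cardinality factor $N|\mathcal{X}||\mathcal{A}|2^{|\mathcal{X}|}T$.

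First I would observe that, by construction (Subsection~\ref{subsec:estimatin_p_t}), the empirical estimator satisfies $\hat{p}_n^{t,t_s}(x'|x,a) = \frac{1}{t-t_s}\sum_{q=t_s}^{t-1}\mathds{1}_{\{x_{n,x,a}^q = x'\}}$, where each sample $x_{n,x,a}^q \sim p_n^q(\cdot|x,a)$ is drawn independently across episodes $q$ using the observed external noise. Its expectation is exactly $\overline{p}_n^t(x'|x,a) = \frac{1}{t-t_s}\sum_{q=t_s}^{t-1} p_n^q(x'|x,a)$, so the deviation $\hat{p}_n^{t,t_s} - \overline{p}_n^t$ is a centered empirical average of $t-t_s$ independent (but not identically distributed, due to non-stationarity) random vectors. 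The key technical tool is a concentration inequality for the $L_1$ distance between an empirical distribution and its mean. The cleanest route is to use the variational characterization $\|q\|_1 = \sup_{\sigma \in \{-1,+1\}^{\mathcal{X}}} \langle \sigma, q\rangle$: for each fixed sign vector $\sigma$, the quantity $\langle \sigma, \hat{p}_n^{t,t_s} - \overline{p}_n^t\rangle$ is an average of independent bounded random variables taking values in $[-1,1]$, so Hoeffding's inequality gives a sub-Gaussian tail with variance proxy $1/(t-t_s)$. Taking a union over the $2^{|\mathcal{X}|}$ sign vectors produces the factor $2^{|\mathcal{X}|}$ and yields, for the fixed tuple, a deviation bound of order $\sqrt{\tfrac{1}{2(t-t_s)}\log(2^{|\mathcal{X}|}/\delta')}$ with probability $1-\delta'$.

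Next I would perform the outer union bound over all $n \in [N]$, $(x,a) \in \mathcal{X}\times\mathcal{A}$, $t_s \in [T-1]$, and $t \in [t_s+1,T]$. Bounding the number of $(t_s,t)$ pairs crudely by $T$ (or $T^2$; the displayed bound uses a single factor of $T$, so presumably the pairs are indexed economically or absorbed into constants) and multiplying by $N|\mathcal{X}||\mathcal{A}|$ gives the full cardinality factor appearing inside the logarithm. Setting $\delta' = \delta / (N|\mathcal{X}||\mathcal{A}|2^{|\mathcal{X}|}T)$ and substituting recovers exactly the stated bound $\sqrt{\tfrac{1}{2(t-t_s)}\log\big(\tfrac{N|\mathcal{X}||\mathcal{A}|2^{|\mathcal{X}|}T}{\delta}\big)}$ simultaneously over all indices.

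The main obstacle I anticipate is handling the non-identical distribution of the samples across episodes: because the transitions $p_n^q$ change with $q$, the empirical average concentrates around the \emph{mixture mean} $\overline{p}_n^t$ rather than any single $p_n^t$, and it is essential that the lemma is stated against $\overline{p}_n^t$ precisely for this reason. Hoeffding's inequality still applies since it only requires independence and boundedness, not identical distributions, so this is a modeling subtlety rather than a genuine analytic difficulty. A secondary point requiring care is the independence of the samples $(x_{n,x,a}^q)_q$ across $q$: this follows from the independence of the external noises $(\varepsilon_n^q)$ across episodes assumed in the dynamics~\eqref{dynamics}, and since the deterministic map $g_{n-1}$ is fixed and known, $x_{n,x,a}^q = g_{n-1}(x,a,\varepsilon_{n-1}^q)$ is a measurable function of $\varepsilon_{n-1}^q$ alone, so independence across episodes is preserved for every fixed $(n,x,a)$.
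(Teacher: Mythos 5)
Your proposal matches the paper's proof essentially step for step: fix $(n,x,a)$ and a sign vector $\theta\in\{-1,1\}^{|\mathcal{X}|}$, apply Hoeffding's inequality to the independent bounded variables $\sum_{x'}\theta(x')\mathds{1}_{\{g_n(x,a,\varepsilon_n^s)=x'\}}$ (independence coming from the external noises, identical distribution not being needed), use $\|p-q\|_1=\max_{\theta}\sum_x\theta(x)(p(x)-q(x))$ to convert to an $L_1$ bound, and union-bound over all indices to produce the $N|\mathcal{X}||\mathcal{A}|2^{|\mathcal{X}|}T$ factor in the logarithm. The only point you flag that the paper glosses over—that a union over all pairs $(t_s,t)$ strictly costs a factor $T^2$ rather than $T$ inside the logarithm—is a real but harmless slip affecting only a constant inside the log, so your treatment is if anything slightly more careful.
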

\begin{proof}
For a fixed $n \in [N]$, $(x,a) \in \mathcal{X} \times \mathcal{A}$, and $\theta \in \{-1,1\}^{|\mathcal{X}|}$, we define for all $s \in I$,
\[
Y_{n,x,a, \theta}^s := \sum_{x'\in \mathcal{X}} \theta(x') \mathds{1}_{\{g_n(x,a,\varepsilon_n^s) = x'\}},
\]
a Bernoulli random variable with mean value given by $\sum_{x'\in \mathcal{X}} \theta(x') p_n^s(x'|x,a)$. 

The sequence of random variables given by $\big(Y_{n,x,a, \theta}^s \big)_{s \in I}$ is independent, as we assume that the external noises observed at each episode are all independent. Hence, by Hoeffding's inequality we get that, for all $\xi > 0$,
\begin{equation*}
    \mathbb{P}\bigg(\sum_{s=t_s}^{t-1} Y_{n,x,a, \theta}^s - \mathbb{E} \Big[ \sum_{s=t_s}^{t-1} Y_{n,x,a, \theta}^s \Big] \geq \xi \bigg) \leq \exp\Big(\frac{-2 \xi^2}{t-t_s}\Big).
\end{equation*}

Therefore, we have that
\begin{equation*}
    \begin{split}
        &\mathbb{P}\bigg( \sum_{x' \in \mathcal{X}} \theta(x') \big(\hat{p}_n^{t,t_s}(x'|x,a) - \overline{p}^t_n(x'|x,a) \big) \geq \xi \bigg) \\
        &\quad \quad = \mathbb{P} \bigg( \frac{1}{t-t_s} \bigg[\sum_{s=t_s}^{t-1} \sum_{x'\in \mathcal{X}} \theta(x') \mathds{1}_{\{g_n(x,a,\varepsilon_n^s) = x'\}}  - \sum_{s=t_s}^{t-1} \theta(x') p_n^s(x'|x,a) \bigg] \geq \xi \bigg) \\
        &\quad \quad = \mathbb{P}\bigg(\frac{1}{t-t_s} \bigg(\sum_{s=t_s}^{t-1} Y_{n,x,a, \theta}^s - \mathbb{E}\Big[\sum_{s=t_s}^{t-1}Y_{n,x,a, \theta}^s \Big] \bigg) \geq \xi \bigg) \\
        &\quad \quad \leq \exp\big(-2 \xi^2 (t-t_s) \big).
    \end{split}
\end{equation*}

By applying an union bound on all $n \in [N]$, $(x,a) \in \mathcal{X} \times \mathcal{A}$, and $\theta \in {\{-1,1\}}^{|\mathcal{X}|}$ and noting that, for any two probability distributions $p,q \in \Delta_{\mathcal{X}}$, we have that
    \[
    \|p - q \|_1 = \max_{\theta \in \{-1,1\}^{|\mathcal{X}|}} \sum_{x \in \mathcal{X}} \theta(x) (p(x) - q(x)),
    \]
we arrive at the final result.

\end{proof}

\begin{lemma}\label{lemma:aux_proba_2}
    Let $t \in I := [t_s+1, t_e] \subseteq [T]$. For all $n \in [N]$, and $(x,a) \in \mathcal{X} \times \mathcal{A}$,
    \[
     \|p^t_n(\cdot|x,a) - \overline{p}_n^t(\cdot|x,a) \|_1 \leq \sum_{j=t_s}^{t-1} \Delta_{j}^p,
    \]
\end{lemma}
\begin{proof}
For $t \in I$, and for all $n$ and $(x,a)$ we have that
\begin{equation*}
    \begin{split}
        \|p^t_n(\cdot|x,a) - \overline{p}_n^t(\cdot|x,a) \|_1 &= \sum_{x' \in \mathcal{X}} \bigg| p^t_n(x'|x,a) - \frac{1}{t-t_s} \sum_{s=t_s}^{t-1} p^s_n(x'|x,a) \bigg| \\
        &= \sum_{x' \in \mathcal{X}} \frac{1}{t - t_s} \bigg| \sum_{s=t_s}^{t-1} \big(p^t_n(x'|x,a) - p^s_n(x'|x,a) \big) \bigg| \\
        &\leq \frac{1}{t-t_s} \sum_{s=t_s}^{t-1} \sum_{j=s+1}^{t} \|p^j_n(\cdot|x,a) - p^{j-1}_n(\cdot|x,a) \|_1 \\
        &= \frac{1}{t-t_s} \sum_{j=t_s}^{t-1} (j-t_s) \|p^{j}_n(\cdot|x,a) - p^{j-1}(\cdot|x,a) \|_1 \\
        &\leq \sum_{j=t_s}^{t-1} \Delta_{j}^p,
    \end{split}
\end{equation*}
where recall that we define $\Delta_j^p := \max_{n,s,a} \|p_n^{j+1}(\cdot|x,a) - p_n^j(\cdot|x,a) \|_1$.
\end{proof}

\begin{lemma}\label{lemma:main_aux_proba}
    Let $\hat{p}^{t,t_s}$ be the empirical probability transition kernel computed with data from episodes $[t_s, t-1]$. For any $\delta \in (0,1)$, with probability $1-\delta$, 
    \[
    \|p_n^t(\cdot|x,a) - \hat{p}^{t,t_s}_n(\cdot|x,a) \|_1 \leq \sqrt{\frac{1}{2 (t-t_s)} \log\bigg(\frac{N |\mathcal{X}| | \mathcal{A}| 2^{|\mathcal{X}|} T}{\delta} \bigg)} + \sum_{j=t_s}^{t-1} \Delta^p_{j},
    \] 
    simultaneously for all $n \in [N]$, $(x,a) \in \mathcal{X} \times \mathcal{A}$, $t_s \in [T-1]$, and $t \in [t_s+1,T]$.
\end{lemma}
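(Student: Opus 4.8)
\textbf{Proof proposal for Lemma~\ref{lemma:main_aux_proba}.}
The plan is to prove the lemma by combining the two preceding lemmas through the triangle inequality, treating $\overline{p}_n^t(\cdot|x,a)$ as an intermediate reference point between the true transition $p_n^t(\cdot|x,a)$ and the empirical estimate $\hat{p}_n^{t,t_s}(\cdot|x,a)$. The quantity $\overline{p}_n^t$ is precisely the time-average of the true transitions over $[t_s, t-1]$, which makes it the natural bridge: the empirical estimate $\hat{p}_n^{t,t_s}$ concentrates around this average (a statistical/deviation term), while the average itself tracks the current transition $p_n^t$ up to the accumulated non-stationarity (an approximation/bias term).

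Concretely, first I would write, for every fixed $n \in [N]$, $(x,a) \in \mathcal{X} \times \mathcal{A}$, $t_s$, and $t$,
\[
\|p_n^t(\cdot|x,a) - \hat{p}_n^{t,t_s}(\cdot|x,a)\|_1 \leq \|p_n^t(\cdot|x,a) - \overline{p}_n^t(\cdot|x,a)\|_1 + \|\overline{p}_n^t(\cdot|x,a) - \hat{p}_n^{t,t_s}(\cdot|x,a)\|_1,
\]
by the triangle inequality for the $L_1$ norm. Next, I would bound the second term on the right-hand side using Lemma~\ref{lemma:aux_proba_1}, which holds with probability at least $1-\delta$ simultaneously over all $n$, $(x,a)$, $t_s$, and $t$, giving the deviation term $\sqrt{\frac{1}{2(t-t_s)}\log\big(\frac{N|\mathcal{X}||\mathcal{A}|2^{|\mathcal{X}|}T}{\delta}\big)}$. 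I would then bound the first term using Lemma~\ref{lemma:aux_proba_2}, which is a deterministic (pathwise) estimate, yielding the bias term $\sum_{j=t_s}^{t-1}\Delta_j^p$. Adding these two bounds gives exactly the stated inequality.

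Since Lemma~\ref{lemma:aux_proba_1} already carries its union bound over all relevant indices inside the stated probability $1-\delta$, and Lemma~\ref{lemma:aux_proba_2} is purely deterministic, no further probabilistic union bound is needed at this stage, and the simultaneous-over-all-indices claim transfers directly to the conclusion. The only genuinely delicate point—and the main obstacle—is ensuring that the high-probability event from Lemma~\ref{lemma:aux_proba_1} is the \emph{same} event across all the indices appearing in the final simultaneous statement, so that the combination with the deterministic Lemma~\ref{lemma:aux_proba_2} holds uniformly without incurring any additional $\delta$ factors; this is guaranteed because the union bound in Lemma~\ref{lemma:aux_proba_1} was taken precisely over $n$, $(x,a)$, $t_s$, $t$, and $\theta \in \{-1,1\}^{|\mathcal{X}|}$, matching the index set of the present statement. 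Everything else is a one-line application of the triangle inequality, so the proof is short.
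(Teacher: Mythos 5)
Your proposal is correct and follows exactly the paper's argument: insert $\overline{p}_n^t$ via the triangle inequality, bound the deviation term by Lemma~\ref{lemma:aux_proba_1} (whose union bound already covers all indices $n$, $(x,a)$, $t_s$, $t$) and the bias term by the deterministic Lemma~\ref{lemma:aux_proba_2}. No differences from the paper's own proof.
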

\begin{proof}
    The result follows immediately from the triangular inequality and Lemmas~\ref{lemma:aux_proba_1} and~\ref{lemma:aux_proba_2}.
\end{proof}

\begin{lemma}[A version of the inverse of Pinsker's inequality]\label{lemma:inverse_pinsker}
    Let $p', q'$ be any distributions over $\mathcal{S}_\mathcal{X}$. Define
    \[
    p:= \frac{p' + \frac{1}{|\mathcal{X}|}}{2}, \quad \text{and} \quad q:= \frac{q' + \frac{1}{|\mathcal{X}|}}{2}.
    \]
    Therefore,
    \[
    \text{KL}(p \; | \; q) \leq 2 |\mathcal{X}| \|p-q\|^2_1.
    \]
\end{lemma}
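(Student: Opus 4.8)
The plan is to establish a reverse Pinsker inequality, exploiting the fact that both $p$ and $q$ are mixtures with the uniform distribution and are therefore bounded away from zero. Indeed, since $p'(x) \geq 0$ for every $x$, we have $p(x) = \tfrac{1}{2}\big(p'(x) + \tfrac{1}{|\mathcal{X}|}\big) \geq \tfrac{1}{2|\mathcal{X}|}$, and likewise $q(x) \geq \tfrac{1}{2|\mathcal{X}|}$. This uniform lower bound on $q$ is precisely what makes a reverse bound possible; without it the KL divergence could be arbitrarily large for a fixed $L_1$ distance, so this is the one genuinely essential ingredient supplied by the smoothing.

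The argument then proceeds through the chi-square divergence as an intermediate quantity. First I would bound the KL divergence by the chi-square divergence: applying the elementary inequality $\log t \leq t - 1$ inside $\text{KL}(p\,|\,q) = \sum_x p(x) \log \tfrac{p(x)}{q(x)}$ gives $\text{KL}(p\,|\,q) \leq \sum_x \tfrac{p(x)^2}{q(x)} - 1 = \sum_x \tfrac{(p(x)-q(x))^2}{q(x)}$. Next I would factor out the minimum of $q$ to obtain $\sum_x \tfrac{(p(x)-q(x))^2}{q(x)} \leq \tfrac{1}{\min_x q(x)} \sum_x (p(x)-q(x))^2 = \tfrac{\|p-q\|_2^2}{\min_x q(x)}$, and substitute $\min_x q(x) \geq \tfrac{1}{2|\mathcal{X}|}$ to reach $\text{KL}(p\,|\,q) \leq 2|\mathcal{X}|\,\|p-q\|_2^2$. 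Finally, using the elementary norm inequality $\|v\|_2 \leq \|v\|_1$, I would replace the $L_2$ norm by the $L_1$ norm, concluding $\text{KL}(p\,|\,q) \leq 2|\mathcal{X}|\,\|p-q\|_1^2$, which is exactly the claim.

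None of these steps is technically difficult, so I do not anticipate a real obstacle; the only subtlety is organizational, namely choosing the right chain of standard divergence inequalities. The point requiring the most care is checking that the constants line up exactly: the floor $\tfrac{1}{2|\mathcal{X}|}$ produced by averaging with the uniform distribution matches the factor $2|\mathcal{X}|$ in the statement, and the passage from $\|\cdot\|_2^2$ to $\|\cdot\|_1^2$ incurs no additional loss. I would also note in passing that the strict positivity of both $p$ and $q$ guarantees absolute continuity, so the KL divergence is finite and the manipulation of $\sum_x p(x)\log\tfrac{p(x)}{q(x)}$ is well defined throughout.
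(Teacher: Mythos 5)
Your proposal is correct and follows essentially the same route as the paper's proof: bound KL by the chi-square divergence via $\log t \le t-1$, then use the lower bound $q(x) \ge \tfrac{1}{2|\mathcal{X}|}$ coming from the uniform smoothing. The only cosmetic difference is that you pass explicitly through $\|p-q\|_2^2$ before invoking $\|\cdot\|_2 \le \|\cdot\|_1$, whereas the paper bounds the chi-square term by $\|p-q\|_1^2/\min_x q(x)$ directly.
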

\begin{proof}
    First, note that $q$ is lower bounded by $\frac{1}{2 |\mathcal{X}|}$, hence $\text{KL}(p \;| \; q)$ is well defined. Also, by convexity of the simplex, $p,q \in \mathcal{S}_\mathcal{X}$, therefore
    \begin{equation*}
        \begin{split}
            \text{KL}(p \;| \; q) &= \sum_{x \in \mathcal{X}} p(x) \log\bigg(\frac{p(x)}{q(x)} \bigg) \\
            &= \sum_{x \in \mathcal{X}} p(x) \log \bigg( 1 + \Big(\frac{p(x)}{q(x)} -1 \Big) \bigg) \\
            &\leq \sum_{x \in \mathcal{X}} p(x) \Big(\frac{p(x)}{q(x)} -1 \Big) \\
            &= \sum_{x \in \mathcal{X}} \frac{p(x)}{q(x)}\big(p(x) - q(x) \big) + \sum_{x \in \mathcal{X}} \big(q(x) - p(x) \big) \\
            &= \sum_{x \in \mathcal{X}} \frac{\big(p(x) - q(x) \big)^2}{q(x)} \\
            &\leq \frac{1}{\min_{x \in \mathcal{X}} q(x)} \|p-q\|_1^2 \\
            &\leq 2 |\mathcal{X}| \|p-q\|_1^2.
        \end{split}
    \end{equation*}
\end{proof}

\begin{lemma}\label{lemma:bound_norm_mu}
        For any strategy $\pi \in (\mathcal{S}_\mathcal{A})^{\mathcal{X} \times N}$, for any two probability kernels $p = (p_n)_{n \in [N]}$ and $q = (q_n)_{n \in [N]}$ such that $p_n, q_n: \mathcal{X} \times \mathcal{A} \times \mathcal{X} \rightarrow [0,1]$, and for all $n \in [N]$,
    \begin{equation*}
    \begin{split}
        \|\mu_n^{\pi,p} - \mu_n^{\pi, q}\|_1 \leq \sum_{i=0}^{n-1} \sum_{x,a} \mu_i^{\pi, p}(x,a) \| p_{i+1}(\cdot| x,a) - q_{i+1} (\cdot| x,a) \|_1.
    \end{split}
\end{equation*}
\end{lemma}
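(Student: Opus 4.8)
The plan is to prove the bound by induction on $n$, unrolling the linear recursion for the occupancy measure in Eq.~\eqref{mu_induced_pi}. The base case $n=0$ is immediate since $\mu_0^{\pi,p} = \mu_0^{\pi,q} = \mu_0$ is fixed, so the left-hand side vanishes. For the inductive step I would express both $\mu_n^{\pi,p}$ and $\mu_n^{\pi,q}$ through the recursion, factor out the common action term $\pi_n(a|x)$, and then insert a cross term: rewrite the difference $\mu_{n-1}^{\pi,p} p_n - \mu_{n-1}^{\pi,q} q_n$ as $\mu_{n-1}^{\pi,p}(p_n - q_n) + (\mu_{n-1}^{\pi,p} - \mu_{n-1}^{\pi,q})\,q_n$. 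This decomposition is the crux of the argument: it isolates the \emph{fresh} transition error introduced at step $n$ from the error already accumulated in the occupancy measure at step $n-1$.

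Next I would take the $\|\cdot\|_1$ norm over $(x,a)$ and apply the triangle inequality to split into these two contributions. In both, the factor $\pi_n(a|x)$ sums to one over $a$, collapsing the sum over actions. For the fresh-error term, pushing the absolute value inside and summing over the target state $x$ turns $\sum_x |p_n(x|x',a') - q_n(x|x',a')|$ into $\|p_n(\cdot|x',a') - q_n(\cdot|x',a')\|_1$, weighted by $\mu_{n-1}^{\pi,p}(x',a')$; this produces exactly the $i = n-1$ summand of the target bound. For the propagated-error term, the key observation is that $q_n(\cdot|x',a')$ is a probability distribution, so $\sum_x q_n(x|x',a') = 1$ leaves precisely $\|\mu_{n-1}^{\pi,p} - \mu_{n-1}^{\pi,q}\|_1$.

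Together these steps yield the one-step recursion
\[
\|\mu_n^{\pi,p} - \mu_n^{\pi,q}\|_1 \le \sum_{x,a} \mu_{n-1}^{\pi,p}(x,a)\,\|p_n(\cdot|x,a) - q_n(\cdot|x,a)\|_1 + \|\mu_{n-1}^{\pi,p} - \mu_{n-1}^{\pi,q}\|_1,
\]
and applying the induction hypothesis to the last term (equivalently, unrolling down to the vanishing base case) telescopes the propagated-error contribution into the claimed sum, after reindexing $i = n-1$ so that $p_n$ becomes $p_{i+1}$.

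I expect the only delicate point to be the bookkeeping of the cross-term decomposition, namely using each normalization in the right place: $\sum_a \pi_n(a|x) = 1$ removes the action dependence in both terms, while $\sum_x q_n(x|x',a') = 1$ (rather than $p_n$) closes the recursion in the propagated-error term. Choosing to retain $q_n$ instead of $p_n$ in the cross term is exactly what makes the induction self-similar. No concentration or convexity is needed; this is a purely deterministic ``simulation lemma'' argument.
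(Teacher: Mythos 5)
Your proposal is correct and follows essentially the same route as the paper's proof: the same cross-term decomposition $\mu_{n-1}^{\pi,p}p_n - \mu_{n-1}^{\pi,q}q_n = \mu_{n-1}^{\pi,p}(p_n-q_n) + (\mu_{n-1}^{\pi,p}-\mu_{n-1}^{\pi,q})q_n$, the same use of $\sum_a \pi_n(a|x)=1$ and $\sum_x q_n(x|x',a')=1$ to collapse the two terms, and the same induction from the vanishing base case at $n=0$. No gaps.
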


    \begin{proof}
From the definition of a state-action distribution sequence induced by a policy $\pi$ in a MDP with probability kernel $p$ in Eq.~\eqref{mu_induced_pi},  we have that for all $(x,a) \in \mathcal{X} \times \mathcal{A}$ and $n \in [N]$,
\begin{equation*}
    \mu_n^{\pi, p}(x,a) = \sum_{x',a'} \mu_{n-1}^{\pi, p}(x',a') p_n(x|x',a') \pi_n(a|x).
\end{equation*}

Thus,
\begin{equation*}
\begin{split}
        \|\mu_n^{\pi, p} - \mu_n^{\pi, q}\|_1 &= \sum_{x,a} \big|\mu_n^{\pi, p}(x,a) - \mu_n^{\pi, q}(x,a)\big| \\
        &= \sum_{x,a} \sum_{x',a'} \big| \mu_{n-1}^{\pi, p}(x',a') p_n(x|x',a') - \mu_{n-1}^{\pi, q}(x',a') q_n(x|x',a') \big| \pi_n(a|x) \\
        &= \sum_{x} \sum_{x',a'} \big| \mu_{n-1}^{\pi, p}(x',a') p_n(x|x',a') - \mu_{n-1}^{\pi,q}(x',a') q_n(x|x',a') \big| \\
        &= \sum_{x} \sum_{x',a'} \big| \mu_{n-1}^{\pi,p}(x',a') p_n(x|x',a') - \mu_{n-1}^{\pi,p}(x',a') q_n(x|x',a') \\
        & \hspace{1cm} + \mu_{n-1}^{\pi,p}(x',a') q_n(x|x',a') - \mu_{n-1}^{\pi,q}(x',a') q_n(x|x',a') \big| \\
        &\leq  \sum_{x',a'} \mu_{n-1}^{\pi,p}(x',a') \| p_n(\cdot|x',a') - q_n(\cdot|x',a') \|_1 + \sum_{x',a'} \big| \mu_{n-1}^{\pi,p}(x',a') - \mu_{n-1}^{\pi,q}(x',a') \big| \\
        &= \sum_{x',a'} \mu_{n-1}^{\pi,p}(x',a') \| p_n(\cdot|x',a') - q_n(\cdot|x',a') \|_1 +\| \mu_{n-1}^{\pi,p} - \mu_{n-1}^{\pi,q} \|_1.
\end{split}
\end{equation*}

Since for $n=0$, $\| \mu_{0}^{\pi,p} - \mu_{0}^{\pi,q} \|_1 = 0$, by induction we get that
\begin{equation*}
\begin{split}
    \|\mu_n^{\pi,p} - \mu_n^{\pi,q}\|_1  \leq \sum_{i=0}^{n-1} \sum_{x',a'} \mu_i^{\pi,p}(x',a') \| p_{i+1}(\cdot| x',a') - q_{i+1} (\cdot| x',a') \|_1.
    \end{split}
\end{equation*}
\end{proof}

\begin{lemma}\label{lemma:bound_norm_mu_with_pi}
    For any pair of strategies $\pi, \pi' \in (\Delta_{\mathcal{A}})^{\mathcal{X} \times N}$, for any probability kernel $p = (p_n)_{n \in [N]}$ such that $p_n : \mathcal{X} \times \mathcal{A} \times \mathcal{X} \rightarrow [0,1]$, and for all $n \in [N]$,
    \[
    \|\mu_n^{\pi, p} - \mu_n^{\pi',p}\|_1 \leq \sum_{i=1}^n \sum_{x \in \mathcal{X}} \rho_i^{\pi,p}(x) \|\pi_i(\cdot|x) - \pi'_i(\cdot|x) \|_1,
    \]
    where $\rho_i^{\pi,p}(x) := \sum_{a \in \mathcal{A}} \mu_i^{\pi,p}(x,a)$ for all $x \in \mathcal{X}$ and $i \in [N]$.
\end{lemma}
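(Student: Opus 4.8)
The plan is to mirror the proof of Lemma~\ref{lemma:bound_norm_mu}, replacing the kernel perturbation by a policy perturbation. I would start from the one-step recursion in Eq.~\eqref{mu_induced_pi}, which lets me write $\mu_n^{\pi,p}(x,a) = \rho_n^{\pi,p}(x)\,\pi_n(a|x)$, where $\rho_n^{\pi,p}(x) = \sum_{x',a'}\mu_{n-1}^{\pi,p}(x',a')\,p_n(x|x',a') = \sum_{a}\mu_n^{\pi,p}(x,a)$ is exactly the state marginal appearing in the statement. The whole argument is then an induction on $n$ with base case $\|\mu_0^{\pi,p}-\mu_0^{\pi',p}\|_1 = 0$, since $\mu_0$ is fixed and shared by both policies.

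For the inductive step, I would expand $\|\mu_n^{\pi,p}-\mu_n^{\pi',p}\|_1 = \sum_{x,a} |\rho_n^{\pi,p}(x)\pi_n(a|x) - \rho_n^{\pi',p}(x)\pi'_n(a|x)|$ and insert the cross term $\rho_n^{\pi,p}(x)\pi'_n(a|x)$. The triangle inequality then splits the sum into a policy-difference part, $\sum_{x,a}\rho_n^{\pi,p}(x)|\pi_n(a|x)-\pi'_n(a|x)| = \sum_x \rho_n^{\pi,p}(x)\|\pi_n(\cdot|x)-\pi'_n(\cdot|x)\|_1$, which is precisely the new term to be accumulated, and a state-marginal part, $\sum_{x,a}\pi'_n(a|x)|\rho_n^{\pi,p}(x)-\rho_n^{\pi',p}(x)| = \sum_x |\rho_n^{\pi,p}(x)-\rho_n^{\pi',p}(x)|$, using $\sum_a \pi'_n(a|x)=1$.

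It then remains to bound the state-marginal part by $\|\mu_{n-1}^{\pi,p}-\mu_{n-1}^{\pi',p}\|_1$. I would write $\rho_n^{\pi,p}(x)-\rho_n^{\pi',p}(x) = \sum_{x',a'}\big(\mu_{n-1}^{\pi,p}(x',a')-\mu_{n-1}^{\pi',p}(x',a')\big)\,p_n(x|x',a')$, move the absolute value inside, and swap the order of summation; summing the kernel over $x$ gives $\sum_x p_n(x|x',a')=1$, collapsing the expression to exactly $\|\mu_{n-1}^{\pi,p}-\mu_{n-1}^{\pi',p}\|_1$. Combining the two parts yields the recursion $\|\mu_n^{\pi,p}-\mu_n^{\pi',p}\|_1 \le \sum_x \rho_n^{\pi,p}(x)\|\pi_n(\cdot|x)-\pi'_n(\cdot|x)\|_1 + \|\mu_{n-1}^{\pi,p}-\mu_{n-1}^{\pi',p}\|_1$, and unrolling it from the zero base case gives the claimed telescoped bound.

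The step I expect to require the most care is the choice of cross term: one must add and subtract $\rho_n^{\pi,p}(x)\pi'_n(a|x)$ (and not $\rho_n^{\pi',p}(x)\pi_n(a|x)$) so that the accumulated policy-difference weight is the marginal $\rho_n^{\pi,p}$ of the reference policy $\pi$, matching the statement, while the leftover term carries the normalized factor $\pi'_n$ that marginalizes to one. Everything else is the same bookkeeping — marginalizing the kernel and the policy to $1$ — that already appears in the proof of Lemma~\ref{lemma:bound_norm_mu}.
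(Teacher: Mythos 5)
Your proposal is correct and follows essentially the same argument as the paper: the same one-step recursion from Eq.~\eqref{mu_induced_pi}, the same cross term $\rho_n^{\pi,p}(x)\pi'_n(a|x)$ (the paper writes it at the level of the double sum over $(x',a')$ as $\mu_{n-1}^{\pi,p}(x',a')p_n(x|x',a')\pi'_n(a|x)$, which is the same thing), the same marginalization of the kernel and of $\pi'_n$ to one, and the same induction from the shared base case $\mu_0$. No gaps.
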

\begin{proof}
    Using the recursive relation from Eq.~\eqref{mu_induced_pi} of a state-action distribution induced by a policy $\pi$ in a MDP with probability transition $p$ we have that
    \begin{equation*}
        \begin{split}
         \|\mu_n^{\pi, p} - \mu_n^{\pi',p}\|_1  &= \sum_{x,a} \big| \mu_n^{\pi,p}(x,a) - \mu_n^{\pi',p}(x,a) \big| \\
         &\leq \sum_{x,a} \sum_{x',a'} \big| \mu_{n-1}^{\pi,p}(x',a') \pi_n(a|x) -  \mu_{n-1}^{\pi',p}(x',a') \pi'_n(a|x) \big|p_n(x|x',a') \\
        &\leq \sum_{x,a} \sum_{x',a'} \mu_{n-1}^{\pi,p}(x',a')p_n(x|x',a')  \big| \pi_n(a|x) - \pi_n'(a|x) \big| \\
        &\quad \quad + \sum_{x,a} \sum_{x',a'} \pi'_n(a|x)p_n(x|x',a') \big|  \mu_{n-1}^{\pi,p}(x',a') - \mu_{n-1}^{\pi',p}(x',a') \big| \\
        &= \sum_x \rho_n^{\pi,p}(x) \|\pi_n(\cdot|x) - \pi_n'(\cdot|x) \|_1 + \| \mu_{n-1}^{\pi,p} - \mu_{n-1}^{\pi',p} \|_1.
        \end{split}
    \end{equation*}
    Since $\mu_0$ is fixed for each state-action distribution sequence, by induction we obtain that
\[
\|\mu_n^{\pi, p} - \mu_n^{\pi',p}\|_1 \leq \sum_{i=1}^n \sum_x \rho_i^{\pi,q}(x) \|\pi_i^t(\cdot|x) - \pi_i^{t-1}(\cdot|x) \|_1,
 \]
 completing the proof.
\end{proof}

\section{Proof of Prop.~\ref{prop:mdp_est_regret_bound}: $R_{[T]}^{\hat{p}}$ regret analysis}\label{app:regret_p_KL_term}
\begin{proof}
Here, we set an upper bound on the term $R_{[T]}^{\hat{p}}$ where we pay for errors in estimating $p^t$ by $\hat{p}^t$. 
\begin{equation*}
\begin{split}
     R_{[T]}^{\hat{p}} := \sum_{t=1}^T F^t(\mu^{\pi^t,p^t}) - F^t(\mu^{\pi^t,\hat{p}^t}) &\leq \sum_{t=1}^T \langle \nabla F^t(\mu^{\pi^t,p^t}), \mu^{\pi^t, p^t} - \mu^{\pi^t, \hat{p}^t} \rangle \\ 
    &\leq L_F \sum_{t=1}^T \sum_{n=1}^N \|\mu_n^{\pi^t, p^t} - \mu_n^{\pi^t, \hat{p}^t} \|_1 \\
    &\leq L_F \sum_{t=1}^T \sum_{n=1}^N \sum_{j=1}^{n} \sum_{x,a} \mu_{j-1}^{\pi^t,p^t}(x,a) \|p^t_j(\cdot|x,a) - \hat{p}^t_j(\cdot|x,a) \|_1. \\
\end{split}
\end{equation*}
To obtain the first inequality, we use the convexity of $F^t$ for all $t \in [T]$, then we use Holder's inequality and the fact that $F^t$ is $L_F$-Lipschitz, and for the last inequality we use Lemma~\ref{lemma:bound_norm_mu}.

The difficulty in analyzing the $L_1$ difference between $p^t$ and $\hat{p}^t$ arises from the non-stationarity of $p^t$. To overcome this we want to use the scheme presented in Subsection~\ref{subsec:estimatin_p_t}. By Cauchy-Schwartz, we get that
\begin{equation}\label{eq:first_bound_RT_p}
    \begin{split}
       R_{[T]}^{\hat{p}} \leq L_F \sqrt{\underbrace{\sum_{t=1}^T \sum_{n=1}^N \sum_{j=1}^{n} \sum_{x,a} (\mu_{j-1}^{\pi^t,p^t}(x,a))^2}_{\leq T N^2} \underbrace{\sum_{t=1}^T \sum_{n=1}^N \sum_{j=1}^{n} \sum_{x,a} \|p^t_j(\cdot|x,a) - \hat{p}^t_j(\cdot|x,a) \|_1^2}_{(*)} }.
    \end{split}
\end{equation}

\textbf{Analysis of the $L_1$ norm.}
We start by analysing the sum over $t \in [T]$ of the $L_1$ norm in term $(*)$. For each $n \in [N], j \in [n]$ and $(x,a) \in \mathcal{X} \times \mathcal{A}$,
\begin{equation*}
    \begin{split}
        \sum_{t=1}^T \|p^t_j(\cdot|x,a) - \hat{p}^t_j(\cdot|x,a) \|_1^2 &= 4 \sum_{t=1}^T \Bigg\|\frac{p^t_j(\cdot|x,a) + \frac{1}{|\mathcal{X}|}}{2} - \bigg(\frac{\hat{p}^t_j(\cdot|x,a) + \frac{1}{|\mathcal{X}|}}{2} \bigg) \Bigg\|_1^2 \\
        &\leq 8 \sum_{t=1}^T \text{KL} \Bigg( \frac{p^t_j(\cdot|x,a) + \frac{1}{|\mathcal{X}|}}{2} \bigg| \frac{\hat{p}^t_j(\cdot|x,a) + \frac{1}{|\mathcal{X}|}}{2} \Bigg),
    \end{split}
\end{equation*}
where we apply Pinsker's inequality.

Consider a sequence of episodes $1 = t_1 < t_2 < \ldots < t_{m+1} = T+1$, with $\mathcal{I}_i := [t_i, t_{i+1}-1]$, such that $\Delta^p_{\mathcal{I}_i} \leq \Delta^p/m$ for all $i \in [m]$. Decomposing the KL sum over $t \in [T]$ as a sum on the intervals $\mathcal{I}_i$, we obtain that
\begin{equation*}
    \begin{split}
    &\sum_{t=1}^T \text{KL} \Bigg( \frac{p^t_j(\cdot|x,a) + \frac{1}{|\mathcal{X}|}}{2} \bigg| \frac{\hat{p}^t_j(\cdot|x,a) + \frac{1}{|\mathcal{X}|}}{2} \Bigg) =  
    \underbrace{\sum_{i=1}^m \sum_{t \in \mathcal{I}_i} \text{KL} \Bigg( \frac{p^t_j(\cdot|x,a) + \frac{1}{|\mathcal{X}|}}{2} \bigg| \frac{\hat{p}^{t,t_i}_j(\cdot|x,a) + \frac{1}{|\mathcal{X}|}}{2} \Bigg)}_{(i)} \\
    &\quad \quad +  \underbrace{\sum_{i=1}^m \sum_{t \in \mathcal{I}_i} \mathbb{E}_{\tilde{x}^t_{j,x,a}} \Bigg[ \log\bigg(\frac{\hat{p}^{t,t_i}_j(\tilde{x}_{j,x,a}^t|x,a) + \frac{1}{|\mathcal{X}|}}{2}  \bigg) - \log\bigg( \frac{\hat{p}^{t}_j(\tilde{x}_{j,x,a}^t|x,a) + \frac{1}{|\mathcal{X}|}}{2}\bigg) \Bigg]}_{(ii)},
    \end{split}
\end{equation*}
where the expectation of the second term is with respect to $\tilde{x}_{j,x,a}^t \sim \Big(p_j^t(\cdot|x,a) + \frac{1}{|\mathcal{X}|}\Big)/2$.

We analyse each term separately:

First, note that $(ii)$ is the expectation over $\tilde{x}^t_{j,x,a}$ of the cumulative regret of sleeping EWA on interval $\mathcal{I}_i$ with respect to the expert $t_i$ using the loss function $\ell^t$ defined in Eq.~\eqref{loss_p_estimation}. This term is upper bounded by $\log(T)$ (see Subection~\ref{subsec:estimatin_p_t} of the main paper). From it we deduce that
\begin{equation*}
    (ii) \leq \sum_{i=1}^m \log(T) = m \log(T).
\end{equation*}

Regarding term $(i)$, we start by using the inverse of Pinsker's inequality presented in Lemma~\ref{lemma:inverse_pinsker},
\begin{equation*}
\begin{split}
    (i) &\leq \sum_{i=1}^m \sum_{t \in \mathcal{I}_i} 2|\mathcal{X}| \Bigg\|\frac{p^t_j(\cdot|x,a) + \frac{1}{|\mathcal{X}|}}{2} - \bigg(\frac{\hat{p}^{t,t_i}_j(\cdot|x,a) + \frac{1}{|\mathcal{X}|}}{2} \bigg) \Bigg\|^2_1 \\
    &= \sum_{i=1}^m \sum_{t \in \mathcal{I}_i} \frac{|\mathcal{X}|}{2} \| p^t_j(\cdot|x,a) - \hat{p}^{t,t_i}_j(\cdot|x,a) \|_1^2.
\end{split}
\end{equation*}

To simplify notations, from now on we let $C := \sqrt{\frac{1}{2} \log\bigg(\frac{N |\mathcal{X}| | \mathcal{A}| 2^{|\mathcal{X}|} T}{\delta} \bigg)}$. Applying Lemma~\ref{lemma:main_aux_proba}, we obtain that
\begin{equation*}
\begin{split}
    &\sum_{i=1}^m \sum_{t \in \mathcal{I}_i} \frac{|\mathcal{X}|}{2} \| p^t_j(\cdot|x,a) - \hat{p}^{t,t_i}_j(\cdot|x,a) \|_1^2 \\
    &\quad \quad \leq \frac{|\mathcal{X}|}{2}\sum_{i=1}^m \sum_{t \in \mathcal{I}_i} \bigg[ \frac{C^2}{t-t_i+1} + \frac{2 C}{\sqrt{t-t_i+1}} \sum_{k=t_i}^{t-1} \Delta_k^p + \bigg(\sum_{k=t_i}^{t-1} \Delta_k^p \bigg)^2 \bigg] \\
    &\quad \quad \leq \frac{|\mathcal{X}|}{2}\sum_{i=1}^m \big[ C^2 \log(|\mathcal{I}_i|) + 2 C \sqrt{|\mathcal{I}_i|} \Delta_{\mathcal{I}_i}^p +|\mathcal{I}_i| (\Delta_{\mathcal{I}_i}^p)^2 \big] \\
    &\quad \quad \leq \frac{|\mathcal{X}|}{2} \bigg[C^2 m \log(T) + 2 C \sqrt{T} \frac{\Delta^p}{\sqrt{m}} + T \frac{(\Delta^p)^2}{m^2} \bigg].
\end{split}
\end{equation*}

Joining the upper bounds of $(i)$ and $(ii)$ we conclude that
\begin{equation*}
    \begin{split}
        \sum_{t=1}^T \|p^t_j(\cdot|x,a) - \hat{p}^t_j(\cdot|x,a) \|_1^2 &\leq 8 \big((i) + (ii) \big)\\
        &\leq 8 \Big(  (C^2 \frac{|\mathcal{X}|}{2}+1) m \log(T) + 2 \frac{|\mathcal{X}|}{2} C \sqrt{T} \frac{\Delta^p}{\sqrt{m}} + T \frac{|\mathcal{X}|}{2} \frac{(\Delta^p)^2}{m^2}  \Big).
    \end{split}
\end{equation*}

Thus, for $m = \Big(\frac{2 T \Delta^p}{C^2 \log(T)}\Big)^{1/3}$, 
\begin{equation}\label{eq:result_l1_norm_p}
    \sum_{t=1}^T \|p^t_j(\cdot|x,a) - \hat{p}^t_j(\cdot|x,a) \|_1^2 \leq 12 |\mathcal{X}| C^{4/3} \log(T)^{2/3} T^{1/3} (\Delta^p)^{2/3}.
\end{equation}

\textbf{Back to the analysis of $R_{[T]}^{\hat{p}}$.} Using the inequality in Eq.~\eqref{eq:result_l1_norm_p} to bound the $L_1$ norm of $(*)$ on Eq.~\eqref{eq:first_bound_RT_p}, we obtain that
\begin{equation*}
    \begin{split}
        R_{[T]}^{\hat{p}} &\leq L_F N \sqrt{T} \sqrt{\sum_{n=1}^N \sum_{j=1}^{n-1} \sum_{x,a} 12 |\mathcal{X}| C^{4/3} \log(T)^{2/3} T^{1/3} (\Delta^p)^{2/3} } \\
        &\leq 2 L_F N^2 |\mathcal{X}| \sqrt{3 |\mathcal{A}|} C^{2/3} \log(T)^{1/3} T^{2/3} (\Delta^p)^{1/3},
    \end{split}
\end{equation*}
concluding the proof.
\end{proof}

Note that for $\sum_{i=1}^m \sum_{t \in \mathcal{I}_i} R_{\mathcal{I}_i}^{\hat{p}} (\pi^{t,t_i, \lambda})$ (the third term of the meta-regret decomposition of Eq.~\eqref{meta_regret_decomposition}), following the same procedure as above, we obtain that
\begin{equation*}
\begin{split}
    &\sum_{i=1}^m \sum_{t \in \mathcal{I}_i} F^t(\mu^{\pi^{t,t_i},\hat{p}^t}) - F^t(\mu^{\pi^{t,t_i}, p^t}) \\
    &\quad \leq \sum_{i=1}^m \sum_{t \in \mathcal{I}_i} \langle \nabla F^t(\mu^{\pi^{t,t_i},\hat{p}^t}), \mu^{\pi^{t,t_i},\hat{p}^t} - \mu^{\pi^{t,t_i}, p^t} \rangle \\
    &\quad \leq L_F \sum_{i=1}^m \sum_{t \in \mathcal{I}_i} \|\mu_n^{\pi^{t,t_i},\hat{p}^t} - \mu_n^{\pi^{t,t_i}, p^t} \|_1 \\ 
    &\quad \leq  L_F \sum_{i=1}^m \sum_{t \in \mathcal{I}_i}  \sum_{n=1}^N \sum_{j=1}^{n} \sum_{x,a} \mu_{j-1}^{\pi^{t,t_i},p^t}(x,a) \|p^t_j(\cdot|x,a) - \hat{p}^t_j(\cdot|x,a) \|_1 \\
    &\quad \leq L_F \sqrt{\underbrace{\sum_{i=1}^m \sum_{t \in \mathcal{I}_i} \sum_{n=1}^N \sum_{j=1}^{n} \sum_{x,a} (\mu_{j-1}^{\pi^{t,t_i},p^t}(x,a))^2}_{\leq T N^2} \underbrace{\sum_{t=1}^T \sum_{n=1}^N \sum_{j=1}^{n} \sum_{x,a} \|p^t_j(\cdot|x,a) - \hat{p}^t_j(\cdot|x,a) \|_1^2 }_{\text{independent of $\pi^{t,t_i}$}}}.
\end{split}
\end{equation*}
Since the second term is independent of $\pi^{t,t_i}$, the analysis is the same as before and we obtain the same upper bound as for $R_{[T]}^{\hat{p}}(\pi^t)$.


\section{Proof of Prop.~\ref{corollary:final_blackbox_regret}: $R_{[T]}^{\text{black-box}}$ regret analysis}\label{ap:blackbox_regret_analysis}

\begin{proof}
    Assume a Black-box algorithm satisfying the dynamic regret bound of Eq.~\eqref{blackbox_regret_near_stat}, i.e., for any interval $I \subseteq [T]$, with respect to any sequence of policies $(\pi^{t,*})_{t \in I}$, and for any learning rate $\lambda$,
    \begin{equation}\label{eq:assumption_regret}
    R_I\big( (\pi^{t,*})_{t \in I} \big) \leq c_1 \lambda |I| + \frac{c_2 \Delta^{\pi^*}_{I} + c_3}{\lambda}  + |I| \Delta^p_I.    
    \end{equation}
    Consider any sequence of episodes $1 = t_1 < t_2 < \ldots < t_{m+1} = T+1$, forming intervals $\mathcal{I}_i := [t_i, t_{i+1}-1]$ for all $i \in [m]$. We can decompose the black-box regret over $[T]$ as
    \begin{equation*}
        \begin{split}
    R_{[T]}^{\text{black-box}} \big((\pi^{t,*})_{t \in [T]} \big)  &= \sum_{i=1}^m \sum_{t \in \mathcal{I}_i} F^t(\mu^{\pi^{t,t_i,\lambda}, p^t}) - F^t(\mu^{\pi^{t,*}, p^t}) \\
    &\leq \sum_{i=1}^m c_1 \lambda |\mathcal{I}_i| + \frac{c_2 \Delta^{\pi^*}_{\mathcal{I}_i} + c_3}{\lambda} + |\mathcal{I}_i| \Delta^p_I   \\
    &\leq c_1 \lambda T + \frac{c_2 \Delta^{\pi^*} + c_3}{\lambda} +\frac{T \Delta^p}{m}.
        \end{split}
    \end{equation*}
    
    In principle, we would like to select the optimal $\lambda$ that optimizes this dynamic regret. However, as $\Delta^{\pi^*}$ may be unknown in advance, this is not possible. We show here that running MetaCURL with the learning rate set $\Lambda := \{ 2^{-j} | j=0,1,2,\ldots, \lceil[ \log_2(T)/2 \rceil \}$ ensures that the optimal empirical learning rate is close to the true optimal one by a factor of $2$ and that the learner always plays as well as the optimal empirical learning rate.
    
      Denote by $\lambda^*$ the optimal learning rate and $\hat{\lambda}^*$ the empirical optimal learning rate in $\Lambda$. Note that
    \[
    \lambda^* = \sqrt{\frac{c_2 \Delta^{\pi^*} + c_3}{c_1 T}}.
    \]

        We consider three different cases:
        
    \textbf{If $\lambda^* \geq 1$:} this implies that $\frac{c_2 \Delta^{\pi^*} + c_3}{ c_1 T} \geq 1$. Therefore, we have that $T \leq \frac{c_2 \Delta^{\pi^*} + c_3}{c_1}$. As we assume $f_n^t \in [0,1]$ for all time steps $n \in [N]$ and episodes $t \in [T]$, then
    \[
    R_{[T]}^{\text{black-box}}\big((\pi^{t,*})_{t \in [T]} \big) \leq N T + \frac{T \Delta^p}{m} \leq N \bigg(\frac{c_2 \Delta^{\pi^*} + c_3}{c_1} \bigg) + \frac{T \Delta^p}{m}.
    \]

    \textbf{If $\lambda^* \leq 1/\sqrt{T}$:} this implies that $\frac{c_2 \Delta^{\pi^*} + c_3}{c_1} \leq 1$. Therefore, taking $\hat{\lambda}^* = 1/\sqrt{T} \in \Lambda$, we have that
    \[
    R_{[T]}^{\text{black-box}}\big((\pi^{t,*})_{t \in [T]} \big) \leq \lambda c_1 T + \frac{c_1}{\lambda} + \frac{T \Delta^p}{m} \leq c_1 \sqrt{T}  + \frac{T \Delta^p}{m}
    \]

    \textbf{If $\lambda^* \in [1/\sqrt{T}, 1]$:} in this case, given the construction of $\Lambda$, there is a $\hat{\lambda}^* \in \Lambda$ such that ${\lambda^* \leq \hat{\lambda}^* \leq 2 \lambda^*}$. Hence,
    \[
     R_{[T]}^{\text{black-box}}\big((\pi^{t,*})_{t \in [T]} \big) \leq 3  \sqrt{c_1(c_2 \Delta^{\pi^*} + c_3 m) T}+\frac{T \Delta^p}{m}.
    \]

    Therefore, by taking $\lambda = \hat{\lambda}^{*}$ in the analysis, we can ensure that 
    \begin{equation*}
    \begin{split}
        R_{[T]}^{\text{black-box}}\big((\pi^{t,*})_{t \in [T]} \big) &= \sum_{i=1}^m \sum_{t \in \mathcal{I}_i} F^t(\mu^{\pi^{t,t_i,\lambda}, p^t}) - F^t(\mu^{\pi^{t,*}, p^t}) \\
        &\leq  N \bigg(\frac{c_2 \Delta^{\pi^*} + c_3}{c_1} \bigg) + c_1 \sqrt{T} +  3  \sqrt{c_1(c_2 \Delta^{\pi^*} + c_3 m) T} +\frac{T \Delta^p}{m}.  \\
        \end{split}
    \end{equation*}

\end{proof}

\section{Proof of Thm.~\ref{thm:main_theorem}: Main result}\label{ap:final_regret_bound}
Joining the results from the meta-regret bound and the black-box regret bound, we get the main result of the paper:
\begin{theorem*}[Main result]
Let $\delta \in (0,1)$. Playing MetaCURL, with black-box algorithm with dynamic regret as in Eq.~\eqref{blackbox_regret_near_stat}, with a learning rate grid $\Lambda: = \big\{2^{-j} | j=0,1,2,\ldots, \lceil 1/2 \log_2(T) \rceil \big\}$, and with EWA as the sleeping expert subroutine, we obtain, with probability at least $1-2\delta$, for any sequence of policies $(\pi^{t,*})_{t \in [T]}$,
    \[
    R_{[T]} \big((\pi^{t,*})_{t \in [T]} \big) \leq \tilde{O}\Big( \sqrt{\Delta^{\pi^*} T} + \min \big\{\sqrt{T \Delta^p_\infty}, \; T^{2/3} (\Delta^p)^{1/3}\big\} \Big).
    \]
\end{theorem*}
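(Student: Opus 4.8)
The plan is to prove the bound by directly combining the two regret estimates already established, applied to the additive decomposition in Eq.~\eqref{eq:main_regret_decomp}. That decomposition holds for \emph{any} choice of the number of intervals $m$, the partition $\{\mathcal{I}_i\}_{i\in[m]}$ with $1=t_1<\dots<t_{m+1}=T+1$, and any $\lambda\in\Lambda$, writing $R_{[T]} = R^{\text{meta}}_{[T]} + R^{\text{black-box}}_{[T]}$. First I would substitute the meta bound from Prop.~\ref{prop:bound_meta_regret} and the black-box bound from Prop.~\ref{corollary:final_blackbox_regret}; since the meta bound holds with probability $1-2\delta$ while the black-box bound is deterministic given Eq.~\eqref{blackbox_regret_near_stat}, their sum holds with probability $1-2\delta$ for the same partition.

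Next I would collect terms by their dependence on $m$. The $m$-independent contributions are the estimation term $\tilde O(T^{2/3}(\Delta^p)^{1/3})$ from Prop.~\ref{prop:mdp_est_regret_bound}, the comparator term $3\sqrt{c_1 c_2 \Delta^{\pi^*} T} = \tilde O(\sqrt{\Delta^{\pi^*}T})$ (after splitting $\sqrt{c_2\Delta^{\pi^*}+c_3 m}\le\sqrt{c_2\Delta^{\pi^*}}+\sqrt{c_3 m}$), and the lower-order pieces $c_1\sqrt T$ and $N(c_2\Delta^{\pi^*}+c_3)/c_1$. The $m$-dependent contributions reduce to $A\sqrt{mT}+T\Delta^p/m$, where $A$ absorbs the logarithmic factors $\log(T|\Lambda|)$ (with $|\Lambda|=O(\log T)$) together with $c_1,c_3$. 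Balancing these two terms by choosing $m\asymp T^{1/3}(\Delta^p)^{2/3}$ makes each of order $T^{2/3}(\Delta^p)^{1/3}$, so the total is $\tilde O\big(\sqrt{\Delta^{\pi^*}T}+T^{2/3}(\Delta^p)^{1/3}\big)$, the smooth branch of the claimed bound. For this to be admissible one must check that a partition with $\Delta^p_{\mathcal{I}_i}\le\Delta^p/m$ as in Eq.~\eqref{near_stat_MDP} exists for this $m$, which follows from a greedy split of the total variation.

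For the second branch of the minimum I would repeat the argument with a partition \emph{aligned to the change points} of the transition kernel, taking $m=\Delta^p_\infty$ and each $\mathcal{I}_i$ stationary so that $\Delta^p_{\mathcal{I}_i}=0$. Then the black-box term $T\Delta^p/m$ vanishes, the sleeping-LEA/black-box terms contribute $\tilde O(\sqrt{\Delta^p_\infty T})$, and re-running the estimation analysis of Prop.~\ref{prop:mdp_est_regret_bound} on these stationary intervals leaves only the concentration term $C^2/(t-t_i+1)$, whose sum is $O(\Delta^p_\infty \log T)$, so $R^{\hat p}_{[T]}=\tilde O(\sqrt{\Delta^p_\infty T})$ as well. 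Since both analyses upper bound the same $R_{[T]}$ produced by a single run of MetaCURL, we may take whichever is smaller, yielding the $\min\{\sqrt{T\Delta^p_\infty},\,T^{2/3}(\Delta^p)^{1/3}\}$ factor.

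\textbf{Main obstacle.} The crux is not the final arithmetic but keeping the interval-selection bookkeeping consistent: the partition $\{\mathcal{I}_i\}$ in the decomposition, the internally optimized interval count inside the estimation bound of Prop.~\ref{prop:mdp_est_regret_bound}, and the grid $\Lambda$ are distinct degrees of freedom, and one must verify that optimizing the outer $m$ does not conflict with the already optimized estimation term and that the same \emph{parameter-free} algorithm attains both branches of the minimum purely through the analysis. I expect the most delicate point to be justifying the minimum: establishing the sharper $\sqrt{\Delta^p_\infty T}$ estimation bound in the piece-wise stationary case, which requires re-deriving term $(*)$ of Prop.~\ref{prop:mdp_est_regret_bound} under $\Delta^p_{\mathcal{I}_i}=0$ rather than $\Delta^p_{\mathcal{I}_i}\le\Delta^p/m$.
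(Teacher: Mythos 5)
Your proposal is correct and follows essentially the same route as the paper: the same meta/black-box decomposition of Eq.~\eqref{eq:main_regret_decomp}, the same two propositions (Prop.~\ref{prop:bound_meta_regret} and Prop.~\ref{corollary:final_blackbox_regret}), and the same balancing $m \asymp T^{1/3}(\Delta^p)^{2/3}$ (the paper takes $m = (2\sqrt{T}\Delta^p/\gamma)^{2/3}$ with $\gamma$ absorbing the $\log(T|\Lambda|)$ and $\sqrt{c_1 c_3}$ factors). You are in fact more explicit than the paper's appendix proof about the $\sqrt{T\Delta^p_\infty}$ branch of the minimum --- the appendix only derives the smooth branch and asserts the min, deferring to the piecewise-stationary remark of Section~\ref{sec:main_idea} --- and your re-derivation of the estimation term of Prop.~\ref{prop:mdp_est_regret_bound} under $\Delta^p_{\mathcal{I}_i}=0$ with $m=\Delta^p_\infty$ is exactly what is needed to make that branch rigorous.
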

\begin{proof}
Define a sequence of episodes $1 = t_1 < t_2 < \ldots < t_{m+1} = T+1$, with $\mathcal{I}_i := [t_i, t_{i+1}-1]$, such that $\Delta^p_{\mathcal{I}_i} \leq \Delta^p/m$ for all $i \in [m]$. 

The dynamic regret of $\mathcal{M}(\mathcal{E}, \Lambda)$ with respect to any sequence of policies $(\pi^{t,*})_{t \in [T]}$, and any $\lambda \in \Lambda$, can be decomposed as 
\begin{align*}
    R_{[T]}\big((\pi^{t,*})_{t \in [T]} \big) &= \underbrace{\sum_{i=1}^m \sum_{t \in \mathcal{I}_i} F^t(\mu^{\pi^t, p^t}) - F^t(\mu^{\pi^{t,t_i,\lambda}, p^t})}_{\text{Meta algorithm regret}} + \sum_{i=1}^m \underbrace{\sum_{t \in \mathcal{I}_i} F^t(\mu^{\pi^{t,t_i,\lambda}, p^t}) - F^t(\mu^{\pi^{t,*}, p^t})}_{\text{Black-box regret on $ \mathcal{I}_i$}}. \nonumber \\ 
    &:= R_{[T]}^{\text{meta}} + R_{[T]}^{\text{black-box}} \big((\pi^{t,*})_{t \in [T]} \big).
\end{align*}

From Prop.~\ref{prop:bound_meta_regret}, we have that with probability at least $1-2\delta$, and $C := \sqrt{\frac{1}{2} \log\bigg(\frac{N |\mathcal{X}| | \mathcal{A}| 2^{|\mathcal{X}|} T}{\delta} \bigg)}$,
    \[
    R_{[T]}^{\text{meta}} \leq 4 L_F N^2 |\mathcal{X}| \sqrt{3 |\mathcal{A}|} C^{2/3} \log(T)^{1/3} T^{2/3} (\Delta^p)^{1/3} + \sqrt{\frac{m T}{2} \log(T |\Lambda|)}.
    \]
In addition, for $\Lambda:= \big\{2^{-j} | j=0,1,2,\ldots, \lceil 1/2 \log_2(T) \rceil \big\}$, and $\lambda$ equal the best empirical learning rate in $\Lambda$, Prop.~\ref{corollary:final_blackbox_regret} yields that, if the black-box algorithm has dynamic regret as in Eq.~\eqref{blackbox_regret_near_stat} for any interval in $T$, then
\[
R_{[T]}^{\text{black-box}}\big((\pi^{t,*})_{t \in [T]} \big)  \leq  N \bigg(\frac{c_2 \Delta^{\pi^*} + c_3}{c_1} \bigg) + c_1 \sqrt{T} +  3  \sqrt{c_1(c_2 \Delta^{\pi^*} + c_3 m) T} + \frac{T \Delta^p}{m}.
\]

    Therefore, joining both results, we get that, 
    \begin{equation*}
        \begin{split} 
             R_{[T]}\big((\pi^{t,*})_{t \in [T]} \big) &\leq 4 L_F N^2 |\mathcal{X}| \sqrt{3 |\mathcal{A}|} C^{2/3} \log(T)^{1/3} T^{2/3} (\Delta^p)^{1/3} + \sqrt{\frac{m T}{2} \log(T \log_2(T))} \\
             &+  N \bigg(\frac{c_2 \Delta^{\pi^*} + c_3}{c_1} \bigg) + c_1 \sqrt{T} +  3  \sqrt{c_1(c_2 \Delta^{\pi^*} + c_3 m) T}  + \frac{T \Delta^p}{m}.
        \end{split}
    \end{equation*}

    Thus, for $m = \Big(\frac{2 \sqrt{T} \Delta^p}{\gamma}\Big)^{2/3}$, with $\gamma := \sqrt{\frac{\log(T \log_2(T))}{2}} + 3 \sqrt{c_1 c_3}$, we have that
    \begin{equation*}
        \begin{split} 
             R_{[T]}\big((\pi^{t,*})_{t \in [T]} \big) &\leq  T^{2/3} \Delta^{1/3} \big( 4 L_F N^2 |\mathcal{X}| \sqrt{3 |\mathcal{A}|} C^{2/3} \log(T)^{1/3} + 2 \gamma^{2/3} \Big) \\
             &+  N \bigg(\frac{c_2 \Delta^{\pi^*} + c_3}{c_1} \bigg) + c_1 \sqrt{T} +  3  \sqrt{c_1 c_2 \Delta^{\pi^*} T}  \\
             &= \tilde{O}\Big( \sqrt{\Delta^{\pi^*} T} + \min \big\{\sqrt{T \Delta^p_\infty}, \; T^{2/3} (\Delta^p)^{1/3}\big\} \Big).
        \end{split}
    \end{equation*}

\end{proof}

\section{Greedy MD-CURL dynamic regret analysis}\label{app:greedy_md_curl}

Here we introduce Greedy MD-CURL developed by \cite{moreno2023efficient}, a computationally efficient policy-optimization algorithm known for achieving sublinear static regret in online CURL with adversarial objective functions within a stationary MDP. We begin by detailing Greedy MD-CURL as presented in \cite{moreno2023efficient} in Alg.~\ref{alg:greedy_MD}. We then provide a new analysis upper bounding the dynamic regret of Greedy MD-CURL in a quasi-stationary interval valid for any learning rate $\lambda$. Hence, Greedy MD-CURL can be used as a black-box baseline for MetaCURL. This is the first dynamic regret analysis for a CURL approach.

Let $\smash{\mathcal{M}_{\mu_0}^{p,*}}$ denote the subset of $\smash{\mathcal{M}^p_{\mu_0}}$ where the corresponding policies $\pi$ are such that $\pi_n(a|x) \neq 0$ for all $(x,a) \in \mathcal{X} \times \mathcal{A}$, $n \in [N]$. For any probability transition $p$, $\Gamma : \mathcal{M}_{\mu_0}^p \times \mathcal{M}_{\mu_0}^{p,*} \to \mathbb{R}$ such that, for all $\mu \in \mathcal{M}_{\mu_0}^p$ with its associated policy $\pi$, and $\mu' \in \mathcal{M}_{\mu_0}^{p,*}$ with its associated policy $\pi'$, we have
\begin{equation}\label{eq:gamma}
       \Gamma(\mu^\pi, \mu^{\pi'}) := \sum_{n=1}^{N} \mathbb{E}_{(x,a) \sim \mu^{\pi}_n(\cdot)}\bigg[\log\bigg(\frac{\pi_{n}(a|x)}{\pi'_{n}(a|x)}\bigg)\bigg].
\end{equation}
 This divergence $\Gamma$ is a Bregman divergence (see Proposition 4.3 of \cite{moreno2023efficient}). Problem~\eqref{opt_problem_mirror_descent} implemented with this Bregman divergence $\Gamma$ has a closed form solution, as showed in \cite{moreno2023efficient}.

\begin{algorithm}[H]
\caption{Greedy MD-CURL}\label{alg:greedy_MD}
\begin{algorithmic}[1]
\STATE {\bfseries Input:} number of episodes $T$, initial sequence of policies $\smash{\pi^1 \in (\mathcal{S}_\mathcal{A})^{\mathcal{X} \times N}}$, initial state-action  distribution $\mu_0$, learning rate $\lambda > 0$, sequence of parameters $(\alpha_t)_{t \in [T]}.$
\STATE {\bfseries Initialization:} \quad $\forall (x,a)$, $\hat{p}^1(\cdot|x,a) = \frac{1}{|\mathcal{X}|}$ and $\mu^1 = \tilde{\mu}^1 := \mu^{\pi^1, \hat{p}^1}$
    \FOR{$t= 1,\ldots,T$} 
    \STATE Agent starts at $(x_0^{t}, a_0^{t}) \sim \mu_0(\cdot)$
    \FOR{$n = 1, \ldots, N$}
    \STATE Environment draws new state $x_{n}^{t} \sim p_n(\cdot|x_{n-1}^{t}, a_{n-1}^{t})$
    \STATE Learner observes agent's external noise $\varepsilon^{t}_n$ 
    \STATE  Agent chooses an action $a_n^{t} \sim \pi_n^t(\cdot|x_{n}^{t})$
    \ENDFOR
    \STATE Update probability kernel estimate for all $(x,a)$:
    \STATE \[
    \hat{p}^{t+1}_n(\cdot|x,a) := \frac{1}{t} \delta_{g_n(x,a,\epsilon_n^{t})} + \frac{t-1}{t} \hat{p}^{t}_n(\cdot|x,a)
    \]
    \STATE Compute policy for the next episode:
    \STATE 
    \begin{equation}\label{opt_problem_mirror_descent}
        \mu^{t+1} \in \argmin_{\mu \in \mathcal{M}_{\mu_0}^{\hat{p}^{t+1}}} \{ \lambda \langle \nabla F^t(\mu^t), \mu \rangle + \Gamma( \mu, \tilde{\mu}^t) \}
    \end{equation}
    \STATE for all $n \in [N]$, $(x,a) \in \mathcal{X} \times \mathcal{A}$,\[
    \pi^{t+1}_n(a|x) = \frac{\mu^{t+1}_n(x,a)}{\sum_{a'\in \mathcal{A}} \mu^{t+1}_n(x,a')}
    \]
    \STATE Compute $\tilde{\pi}^{t+1} := (1-\alpha_{t+1}) \pi^{t+1} + \alpha_{t+1}/|\mathcal{A}|$
    \STATE Compute $\tilde{\mu}^{t+1} := \mu^{\tilde{\pi}^{t+1}, \hat{p}^{t+1}}$ as in Eq.~\eqref{mu_induced_pi}
    \ENDFOR
\STATE {\bfseries return} $(\pi^t)_{t \in [T]}$
\end{algorithmic}
\end{algorithm}

\subsection{Dynamic regret analysis of Greedy MD-CURL}
Let us assume we analyze our regret in an interval $I := [t_s, t_e] \subseteq [T]$. We denote by $R_I$ the dynamic regret of an instance of Greedy MD-CURL started at episode $t_s$ until the end of interval $I$ at episode $t_e$. We denote by $\pi^t$ the policy produced by this instance of Greedy MD-CURL at episode $t \in I$, $p^t$ the true probability transition kernel, and
\[
\hat{p}_n^t(x'|x,a) = \frac{1}{t-t_s}\sum_{s=t_s}^{t-1} \mathds{1}_{\{g_n(x,a,\varepsilon_n^s) = x'\}},
\]
the empirical estimate of the probability kernel at episode $t$, with data from the beginning of the interval $I$. 

We define and decompose the dynamic regret $R_I$ with respect to any sequence of policies $(\pi^{t,*})_{t \in I}$ into three terms as follows: 
\begin{equation}\label{dyn_regret_decomposition_greedy_mdcurl}
\begin{split}
    R_I\big((\pi^{t,*})_{t \in I}\big) &:= \sum_{t \in I} F^t(\mu^{\pi^{t},p^t}) - F^t(\mu^{\pi^{*,t},p^t}) = \underbrace{\sum_{t \in I} F^t(\mu^{\pi^{t},p^t}) - F^t(\mu^{\pi^{t},\hat{p}^{t}})}_{R_I^{\text{MDP}}(\pi^t)} \\
    &\quad + \underbrace{\sum_{t \in I} F^t(\mu^{\pi^{t},\hat{p}^{t}}) - F^t(\mu^{\pi^{*,t},\hat{p}^{t}})}_{R_I^{\text{policy}}\big((\pi^{t,*})_{t \in I}\big)} + \underbrace{\sum_{t \in I} F^t(\mu^{\pi^{t,*},\hat{p}^{t}}) - F^t(\mu^{\pi^{t,*},p^t})}_{R_I^{\text{MDP}}(\pi^{t,*})}.
\end{split}
\end{equation}

The terms $R_I^{\text{MDP}}(\pi^t)$ and $R_I^{\text{MDP}}(\pi^{t,*})$ pay for our lack of knowledge of the true MDP, forcing us to use its empirical estimate. The term $R_I^{\text{policy}}$ corresponds to the loss incurred in calculating the policy by solving the optimization problem given in Eq.~\eqref{opt_problem_mirror_descent}. Below, we present the first analysis of the dynamic regret for a CURL algorithm. We consider each term separately.

\subsubsection{$R_I^{\text{MDP}}$ analysis} 
In Section~\ref{sec:general_framework} we assume that the deterministic part of the dynamics, given by $g_n$ in equation~\eqref{dynamics} for each time step $n$, is known in advance. The source of uncertainty and non-stationarity in the MDP comes only from the external noise dynamics, that is independent of the agent's state-action pair. Therefore, we do not need to explore in this setting, so the analysis of the two terms $R_I^{\text{MDP}}(\pi^t)$ and $R_I^{\text{MDP}}(\pi^{t,*})$ are the same.

\begin{proposition}\label{prop:dyn_regret_rt_mdp}
    With probability at least $1-\delta$, 
    \[
    R_I^{\text{MDP}}(\pi^t) \leq L_F N^2  \sqrt{\frac{1}{2} \log\bigg(\frac{N |\mathcal{X}| | \mathcal{A}| 2^{|\mathcal{X}|} T}{\delta} \bigg)} \sqrt{|I|} + |I| \Delta^p_I,
    \]
    for all intervals $I \in [T]$.
    The same result is valid for $ R_I^{\text{MDP}}(\pi^{t,*})$.
\end{proposition}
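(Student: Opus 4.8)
The plan is to reuse the linearize-then-bound-the-kernel-error route that underlies Prop.~\ref{prop:mdp_est_regret_bound}, specialized to the single interval $I=[t_s,t_e]$ and to the plain empirical estimator $\hat p^t$ (no second expert layer is present here). First I would use the convexity of each $F^t$ to replace the gap of function values by a single linear term, $F^t(\mu^{\pi^t,p^t})-F^t(\mu^{\pi^t,\hat p^t}) \le \langle \nabla F^t(\mu^{\pi^t,p^t}),\, \mu^{\pi^t,p^t}-\mu^{\pi^t,\hat p^t}\rangle$, and then apply H\"older's inequality together with the $L_F$-Lipschitzness of $F^t$ with respect to $\|\cdot\|_{\infty,1}$ to obtain the clean bound $L_F\sum_{n=1}^N\|\mu_n^{\pi^t,p^t}-\mu_n^{\pi^t,\hat p^t}\|_1$ for each episode. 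This isolates the only quantity that matters here: the occupancy-measure mismatch created by evaluating the \emph{same} policy $\pi^t$ under the true kernel $p^t$ versus under its estimate $\hat p^t$.

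\textbf{Reduction to the kernel error and its decomposition.} Next I would push the mismatch down to the level of the transition kernels through Lemma~\ref{lemma:bound_norm_mu}, which gives $\|\mu_n^{\pi^t,p^t}-\mu_n^{\pi^t,\hat p^t}\|_1 \le \sum_{j=1}^n\sum_{x,a}\mu_{j-1}^{\pi^t,p^t}(x,a)\,\|p_j^t(\cdot|x,a)-\hat p_j^t(\cdot|x,a)\|_1$. Because $\sum_{x,a}\mu_{j-1}^{\pi^t,p^t}(x,a)=1$, each inner average is dominated by the uniform (over $(x,a)$ and over the step index) bound on the kernel error. The decisive ingredient is then Lemma~\ref{lemma:main_aux_proba}: on one event of probability at least $1-\delta$, and \emph{simultaneously} for all steps, all $(x,a)$, and all pairs $t_s<t$, the kernel error splits as $\|p_j^t(\cdot|x,a)-\hat p_j^t(\cdot|x,a)\|_1 \le C/\sqrt{t-t_s} + \sum_{k=t_s}^{t-1}\Delta_k^p$, where $C=\sqrt{\tfrac12\log(N|\mathcal X||\mathcal A|2^{|\mathcal X|}T/\delta)}$ is the statistical (Hoeffding) radius and the second term is the non-stationarity drift accumulated since the start of $I$.

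\textbf{Summation.} It remains to sum the two pieces over the $N(N+1)/2\le N^2$ step pairs $(n,j)$ and over the episodes of $I$. The statistical term contributes $C\sum_{t\in I}(t-t_s)^{-1/2}\le 2C\sqrt{|I|}$ via $\sum_{k=1}^{m}k^{-1/2}\le2\sqrt m$, producing the $L_F N^2 C\sqrt{|I|}$ term of the statement (the single $t=t_s$ episode, for which no data is available, contributes at most $N$ by $f_n\in[0,1]$ and is absorbed). The drift term contributes $\sum_{t\in I}\sum_{k=t_s}^{t-1}\Delta_k^p\le|I|\,\Delta_I^p$ after swapping the order of summation, producing the $|I|\Delta_I^p$ term. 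Collecting both and noting that the bound holds uniformly over all intervals $I\subseteq[T]$ — the union bound inside Lemma~\ref{lemma:main_aux_proba} already ranges over every admissible start $t_s$ and episode $t$, which is precisely the origin of the $2^{|\mathcal X|}T$ factor in the logarithm — gives the claim.

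\textbf{Main obstacle and the $\pi^{t,*}$ case.} The conceptual heart of the argument is that $\hat p^t$ is unbiased not for the current kernel $p^t$ but for the \emph{time-average} $\overline p^t$ of the kernels over $[t_s,t-1]$; a naive Hoeffding bound therefore controls $\|\hat p^t-\overline p^t\|_1$ while leaving a systematic bias $\|p^t-\overline p^t\|_1$. Lemma~\ref{lemma:main_aux_proba} resolves this by combining Hoeffding with the telescoping bound of Lemma~\ref{lemma:aux_proba_2} on $\|p^t-\overline p^t\|_1$, which is exactly why a drift term appears and why, in a non-stationary environment, accuracy cannot be improved indefinitely by gathering more samples — this trade-off between the statistical radius $C/\sqrt{t-t_s}$ and the bias $\sum_{k}\Delta_k^p$ is the step to get right. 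Finally, since $g_n$ is known and the external noise $\varepsilon_n^t$ in Eq.~\eqref{dynamics} is independent of the agent's state-action pair, the samples $g_n(x,a,\varepsilon_n^s)$ feeding $\hat p^t$ — and hence the entire error analysis above — do not depend on the policy being evaluated; the identical argument therefore yields the same bound for $R_I^{\text{MDP}}(\pi^{t,*})$.
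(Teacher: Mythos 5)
Your proposal follows essentially the same route as the paper's proof: convexity plus Hölder and Lipschitzness to reduce to the occupancy-measure gap, Lemma~\ref{lemma:bound_norm_mu} to push this down to the per-$(x,a)$ kernel error, and then the Hoeffding-radius-plus-drift bound of Lemma~\ref{lemma:main_aux_proba} summed over the interval, with the policy-independence of the noise samples giving the $\pi^{t,*}$ case for free. Your treatment is in fact slightly more careful than the paper's on the $t=t_s$ episode and on the constant in $\sum_{k\le |I|}k^{-1/2}\le 2\sqrt{|I|}$, but these only affect constants and the argument is otherwise identical.
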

\begin{proof}
We start by using the convexity of $F^t$, Holder's inequality, that $F^t$ is $L_F$-Lipschitz, and Lemma~\ref{lemma:bound_norm_mu} to obtain that
\begin{equation}\label{first_bound_MDP}
        R_I^{\text{MDP}}(\pi^t) \leq L_F \sum_{t=t_s}^{t_e} \sum_{n=1}^N \sum_{j=1}^{n} \sum_{x,a} \mu_{j-1}^{\pi^{i,t}, p^t}(x,a) \|p_j^t(\cdot|x,a) - \hat{p}_j^t(\cdot|x,a) \|_1.
\end{equation}

Applying Lemmas~\ref{lemma:aux_proba_1} and~\ref{lemma:aux_proba_2}, we have that for any $\delta \in (0,1)$, with probability $1-\delta$, 
\begin{equation*}
\begin{split}
    \|p^t_j(\cdot|x,a) - \hat{p}^t_j(\cdot|x,a) \|_1 &\leq \|p^t_j(\cdot|x,a) - \overline{p}^t_j(\cdot|x,a) \|_1 + \|\overline{p}^t_j(\cdot|x,a) - \hat{p}^t_j(\cdot|x,a) \|_1  \\
    &\leq \sqrt{\frac{1}{2 (t-t_s)} \log\bigg(\frac{N |\mathcal{X}| | \mathcal{A}| 2^{|\mathcal{X}|} T}{\delta} \bigg)} + \sum_{k=t_s}^{t-1} \Delta^p_{k}.
\end{split}
\end{equation*}

Using this to continue the upper bound of Eq.~\eqref{first_bound_MDP}, we conclude our proof:
\begin{equation*}
\begin{split}
     R_I^{\text{MDP}}(\pi^t) &\leq L_F N^2 \sum_{t=t_s}^{t_e} \bigg( \sqrt{\frac{1}{2 (t-t_s)} \log\bigg(\frac{N |\mathcal{X}| | \mathcal{A}| 2^{|\mathcal{X}|} T}{\delta} \bigg)} + \sum_{k=t_s}^{t-1} \Delta^p_{k} \bigg)  \\
     &\leq L_F N^2  \sqrt{\frac{1}{2} \log\bigg(\frac{N |\mathcal{X}| | \mathcal{A}| 2^{|\mathcal{X}|} T}{\delta} \bigg)} \sqrt{|I|} + |I| \Delta^p_I.
    \end{split}
\end{equation*}

\end{proof}

\subsubsection{$R_I^{\text{policy}}$ analysis}

\begin{proposition}\label{prop:dyn_regret_rt_policy}
     Let $b$ be a constant defined as
     \[
b:=  8 N^2 + 2 N^2 \log(|\mathcal{A}|) \log(|I|) \big(1 + \log(|I|)\big) + 2 N \log(|I|) + N \log(|\mathcal{A}|) .
     \]
     Then, Greedy MD-CURL obtains, for any sequence of policies $(\pi^{t,*})_{t \in I}$, and for any learning rate $\lambda >0 $,
     \begin{equation*}
    R_I^{\text{policy}}\big((\pi^{t,*})_{t \in I}\big) \leq \lambda L_F^2 |I| + \frac{N^2}{\lambda} \Delta_I^{\pi^*} + b \frac{1}{\lambda}.
\end{equation*}
\end{proposition}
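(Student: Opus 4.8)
The plan is to treat Greedy MD-CURL as online mirror descent (OMD) in the occupancy-measure space, with the divergence $\Gamma$ of Eq.~\eqref{eq:gamma} (a genuine Bregman divergence by Prop.~4.3 of~\cite{moreno2023efficient}) as the mirror map, and to run a \emph{dynamic}-regret telescoping against the moving comparator $(\mu^{\pi^{t,*},\hat p^t})_{t\in I}$. Write $\mu^t := \mu^{\pi^t,\hat p^t}$, $\mu^{t,*}:=\mu^{\pi^{t,*},\hat p^t}$, and $g_t := \nabla F^t(\mu^t)$. First I would linearize: convexity of $F^t$ gives $R_I^{\text{policy}}\big((\pi^{t,*})_{t\in I}\big)\le \sum_{t\in I}\langle g_t,\ \mu^t-\mu^{t,*}\rangle$. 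Then, from the first-order optimality condition of the prox step in Eq.~\eqref{opt_problem_mirror_descent} together with the three-point identity of $\Gamma$ (taking the competitor embedded in $\mathcal{M}_{\mu_0}^{\hat p^{t+1}}$), I obtain for every $t$
\[
\lambda\,\langle g_t,\ \mu^{t+1}-\mu^{t,*}\rangle \ \le\ \Gamma(\mu^{t,*},\tilde\mu^t)-\Gamma(\mu^{t,*},\mu^{t+1})-\Gamma(\mu^{t+1},\tilde\mu^t),
\]
and the decomposition $\langle g_t,\mu^t-\mu^{t,*}\rangle=\langle g_t,\mu^t-\mu^{t+1}\rangle+\langle g_t,\mu^{t+1}-\mu^{t,*}\rangle$ splits the per-step regret into a \emph{stability} part and a \emph{telescoping} part.

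For the stability part $\langle g_t,\mu^t-\mu^{t+1}\rangle-\tfrac1\lambda\Gamma(\mu^{t+1},\tilde\mu^t)$, I would use the strong convexity of $\Gamma$ (a Pinsker-type bound on the occupancy-weighted KL) and that $F^t$ is $L_F$-Lipschitz w.r.t.\ $\|\cdot\|_{\infty,1}$, so $\|g_t\|$ is $L_F$-controlled in the dual norm, bounding it by $\tfrac{\lambda}{2}L_F^2$ per step after absorbing the smoothing gap $\langle g_t,\mu^t-\tilde\mu^t\rangle$ via the mixing estimate $\|\mu^t-\tilde\mu^t\|_1\le 2N\alpha_t$ (Lemma~\ref{lemma:bound_norm_mu_with_pi}). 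Summed over $I$ this produces the $\lambda L_F^2|I|$ term plus lower-order pieces absorbed into $b$.

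The heart of the argument, and the main obstacle, is the telescoping of $\tfrac1\lambda\sum_t\big[\Gamma(\mu^{t,*},\tilde\mu^t)-\Gamma(\mu^{t,*},\mu^{t+1})\big]$ with both a moving comparator and the smoothing inserted between $\mu^{t+1}$ and $\tilde\mu^{t+1}$. Matching $-\Gamma(\mu^{t,*},\mu^{t+1})$ against $+\Gamma(\mu^{t+1,*},\tilde\mu^{t+1})$ produces two corrections: a smoothing gap $\Gamma(\mu^{t+1,*},\tilde\mu^{t+1})-\Gamma(\mu^{t+1,*},\mu^{t+1})=\sum_n\mathbb{E}_{\mu_n^{t+1,*}}[\log(\pi^{t+1}_n/\tilde\pi^{t+1}_n)]\le -N\log(1-\alpha_{t+1})$, which sums over the schedule $(\alpha_t)$ to $O(N\log|I|)$; and a comparator-drift term $\Gamma(\mu^{t+1,*},\mu^{t+1})-\Gamma(\mu^{t,*},\mu^{t+1})$, controlled by the sensitivity of the state-occupancy weights and of the $\log$-policy terms in $\Gamma$ to a change of the competitor, paying $\|\mu^{t+1,*}-\mu^{t,*}\|_1\le N\Delta_t^{\pi^*}$ through Lemma~\ref{lemma:bound_norm_mu_with_pi}. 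The delicate point is exactly this last estimate: without smoothing, the denominators in $\Gamma$ are unbounded, so one must keep $\log(1/\pi^{t+1})$ finite using the lower bound $\tilde\pi^t\ge\alpha_t/|\mathcal{A}|$, which forces the telescoping to be carried with the envelope $\log(|\mathcal{A}|/\alpha_t)\sim\log|\mathcal{A}|+\log|I|$. It is the interaction of this envelope with $\sum_t\alpha_t\log(1/\alpha_t)\sim(\log|I|)^2$ that generates the double-log factor $\log|\mathcal{A}|\log|I|(1+\log|I|)$ appearing in $b$.

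Finally I would collect the pieces: the drift term sums to $\tfrac{N^2}{\lambda}\Delta_I^{\pi^*}$; the boundary term $\Gamma(\mu^{1,*},\tilde\mu^1)\le N\log|\mathcal{A}|$ (uniform initialization) and all remaining $1/\lambda$-weighted constants — the $O(N\log|I|)$ smoothing sum, the double-log term, and the stability residuals — assemble exactly into $\tfrac1\lambda\,b$ with the stated $b=8N^2+2N^2\log(|\mathcal{A}|)\log(|I|)\big(1+\log(|I|)\big)+2N\log(|I|)+N\log(|\mathcal{A}|)$. Adding the stability contribution $\lambda L_F^2|I|$ yields the claimed bound. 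The overall difficulty lies less in any single inequality than in simultaneously reconciling the non-Euclidean, occupancy-weighted geometry of $\Gamma$, the evolving MDP estimate $\hat p^t$ within the interval, and the smoothing schedule needed to keep the divergence well behaved.
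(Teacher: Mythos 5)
Your proposal follows essentially the same route as the paper's proof: both linearize by convexity, split the mirror-descent step into the stability term $\sum_t[\lambda\langle g_t,\mu^t-\mu^{t+1}\rangle-\Gamma(\mu^{t+1},\tilde\mu^t)]$ (giving $\lambda L_F^2|I|$ plus residuals absorbed into $b$) and a Bregman telescoping term with a moving comparator, and both control the comparator drift via the $\log(|\mathcal{A}|/\alpha_t)$ sensitivity of $\Gamma$ in its first argument together with Lemma~\ref{lemma:bound_norm_mu_with_pi} (yielding the $N^2\Delta_I^{\pi^*}/\lambda$ term) and the $2N/(t-t_s)$ bound for the drift of $\hat p^t$. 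The decomposition, the role of the smoothing schedule $\alpha_t=1/t$, and the origin of each piece of $b$ all match the paper's adaptation of the analysis of \cite{moreno2023efficient}.
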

\begin{proof}
    We adapt the proof of Prop. $5.7$ of \cite{moreno2023efficient} that upper bounds the static regret incurred when solving the optimization Problem~\eqref{opt_problem_mirror_descent}, for a proof that upper bounds the dynamic regret. The main difference is that, in the case of static regret, we compare ourselves to the same policy throughout the interval, whereas in the case of dynamic regret, at each episode we compare ourselves to a different policy given by $\pi^{t,*}$. Consequently, the analysis remains the same as in \cite{moreno2023efficient} for all terms that do not depend on $\pi^{t,*}$, but requires a new analysis in terms that do depend on it.

    To simplify notation, we take $\ell^t := \nabla F^t(\mu^{\pi^t,\hat{p}^t})$ and $\mu^t := \mu^{\pi^{t}, \hat{p}^t}$. We can use the same reasoning as in appendix $D.5$ of \cite{moreno2023efficient} to show that
    \[
    R^{\text{policy}}_I \leq \underbrace{\frac{1}{\lambda} \sum_{t \in I} \big[ \lambda \langle \ell^t, \mu^t - \mu^{t+1} \rangle - \Gamma(\mu^{t+1}, \tilde{\mu}^t) \big]}_{A} + \underbrace{\frac{1}{\lambda} \sum_{t \in I} \big[ \Gamma(\mu^{\pi^{t,*}, \hat{p}^{t+1}}, \tilde{\mu}^t) - \Gamma(\mu^{\pi^{t,*}, \hat{p}^{t+1}}, \mu^{t+1}) \big]}_{B}.
    \]

    Since the $A$ term does not depend on $\pi^{t,*}$, its analysis follows directly from \cite{moreno2023efficient}, and is given by
    \begin{equation}\label{eq:analysis_termA}
        A \leq \lambda L_F^2 |I| + \frac{1}{2 \lambda} \sum_{t \in I} \bigg(\frac{2 N}{t-t_s} + 2 N \alpha_t \bigg)^2,
    \end{equation}
where $L_F$ is the Lipschitz constant of $F^t$ and $\alpha^t$ is an input parameter of Greedy MD-CURL.

We then proceed to analyze term $B$. Again, following the procedure of appendix $D.5$ of \cite{moreno2023efficient}, we obtain that
\begin{equation*}
\begin{split}
    B &= \underbrace{\sum_{t=1}^T \Gamma(\mu^{\pi^{t,*}, \hat{p}^{t+1}}, \tilde{\mu}^t) - \Gamma(\mu^{\pi^{t-1,*}, \hat{p}^{t}}, \tilde{\mu}^t)}_{(i)} + \underbrace{\sum_{t=1}^T \Gamma(\mu^{\pi^{t-1,*}, \hat{p}^{t}}, \tilde{\mu}^t) - \Gamma(\mu^{\pi^{t-1,*}, \hat{p}^{t}}, \mu^t)}_{(ii)} \\
    &+ \underbrace{\sum_{t=1}^T \Gamma(\mu^{\pi^{t-1,*}, \hat{p}^{t}}, \mu^t) - \Gamma(\mu^{\pi^{t,*}, \hat{p}^{t+1}}, \mu^{t+1})}_{(iii)}.
\end{split}
\end{equation*}

Let $\psi : (\mathcal{S}_{\mathcal{X} \times \mathcal{A}})^N \rightarrow \mathbb{R}$ denote the function inducing the Bregman divergence $\Gamma$ of Eq.~\eqref{eq:gamma}. \cite{moreno2023efficient} further shows that:
\begin{itemize}
    \item $(i) \leq -\psi(\mu^{\pi^{t_i-1,*}, \hat{p}^{t_i}}) + N \sum_{t \in I} \log\Big(\frac{|\mathcal{A}|}{\alpha_t} \Big) \|\mu^{\pi^{t-1,*}, \hat{p}^t} - \mu^{\pi^{t,*},\hat{p}^{t+1}} \|_{\infty,1}$ 
    \item $(ii) \leq 2 N \sum_{t \in I} \alpha_t$, and this upper bound is found independently of $\pi^{t,*}$
    \item $(iii) \leq \Gamma(\mu^{\pi^{t-1,*}, \hat{p}^{t}}, \mu^{t})$.
\end{itemize}

Lemma $D.6$ of \cite{moreno2023efficient} shows that,
\begin{equation*}
    -\psi(\mu^{\pi^{t_i-1,*}, \hat{p}^{t_i}}) + \Gamma(\mu^{\pi^{t_i-1,*}, \hat{p}^{t_i}}, \mu^{t_i}) \leq N \log(|\mathcal{A}|).
\end{equation*}

Only term $(i)$, which involves $\|\mu^{\pi^{*,t-1}, \hat{p}^t} - \mu^{\pi^{*,t},\hat{p}^{t+1}} \|_{\infty,1}$, depends on the sequence $(\pi^{t,*})_{t \in [T]}$, requiring then a new analysis. For this purpose, we rely on the following two results:

\begin{itemize}[noitemsep,topsep=0pt,leftmargin=10pt]
    \item From Lemma $5.6$ of \cite{moreno2023efficient}, we have that, for all strategies $\pi$,
\begin{equation*}
    \|\mu^{\pi,\hat{p}^t} - \mu^{\pi,\hat{p}^{t+1}} \|_{\infty,1} \leq \frac{2 N}{t-t_s}.
\end{equation*}
\item From auxiliary Lemma~\ref{lemma:bound_norm_mu_with_pi} proved in Appendix~\ref{app:auxiliary_lemmas_regret} we have that
\begin{equation*}
    \|\mu_n^{\pi^{*,t-1}, \hat{p}^{t+1}} - \mu_n^{\pi^{t,*}, \hat{p}^{t+1}} \|_1 \leq \sum_{i=1}^n \sum_{x \in \mathcal{X}} \rho_i^{\pi^{t-1,*}}(x) \|\pi_i^{t,*}(\cdot|x) - \pi_i^{t-1,*} \|_1 \leq N \Delta_{t}^{\pi^*}.
\end{equation*}
\end{itemize}

Therefore, using the triangular inequality and the two results above, we obtain that
\begin{equation*}
\begin{split}
    \|\mu^{\pi^{*,t-1}, \hat{p}^t} - \mu^{\pi^{t,*}, \hat{p}^{t+1}} \|_{\infty,1} &\leq  \|\mu^{\pi^{t-1,*}, \hat{p}^t} - \mu^{\pi^{t-1,*}, \hat{p}^{t+1}} \|_{\infty,1} +  \|\mu^{\pi^{t-1,*}, \hat{p}^{t+1}} - \mu^{\pi^{t,*}, \hat{p}^{t+1}} \|_{\infty,1} \\
    &\leq \frac{2 N }{t} +  N \Delta_{t}^{\pi^*}.
\end{split}
\end{equation*}

Therefore, the bound on term $B$ is given by
\begin{equation}\label{upper_bound_B}
    B \leq \frac{1}{\lambda} \bigg[ N \sum_{t \in I} \log\Big(\frac{|\mathcal{A}|}{\alpha_t} \Big) \Big(\frac{2 N }{t-t_s} +  N \Delta_{t}^{\pi^*} \Big) + 2 N \sum_{t \in I} \alpha_t +  N \log(|\mathcal{A}|) \bigg].
\end{equation}

\textbf{Final step: joining all results}

Joining the upper bounds on term $A$ from Eq.~\eqref{eq:analysis_termA} and on term $B$ from Eq.~\eqref{upper_bound_B}, we have that
\begin{equation*}
\begin{split}
    R_I^{\text{policy}}\big((\pi^{t,*})_{t \in I}\big) &\leq A + B \\
    &\leq \lambda L_F^2 |I| + \frac{1}{2 \lambda} \sum_{t \in I} \bigg(\frac{2 N}{t-t_s} + 2 N \alpha_t \bigg)^2 \\
    &\quad \quad+  \frac{1}{\lambda} \bigg[ N \sum_{t \in I} \log\Big(\frac{|\mathcal{A}|}{\alpha_t} \Big) \Big(\frac{2 N }{t-t_s} +  N \Delta_{t}^{\pi^*} \Big) + 2 N \sum_{t \in I} \alpha_t +  N \log(|\mathcal{A}|) \bigg].
\end{split}
\end{equation*}

If we take the learning rate as $\alpha_t = 1/t$, then, for all $\lambda >0$,
\begin{equation*}
\begin{split}
    R_I^{\text{policy}}\big((\pi^{t,*}){t \in I}\big) &\leq \lambda L_F^2 |I| + \frac{N^2}{\lambda} \Delta_I^{\pi^*} \\
    &+ \frac{1}{\lambda} \bigg[ 8 N^2 + 2 N^2 \log(|\mathcal{A}|) \log(|I|) \big(1 + \log(|I|)\big) + 2 N \log(|I|) + N \log(|\mathcal{A}|) \bigg] \\
    &= \lambda L_F^2 |I| + \frac{N^2 \Delta_I^{\pi^*} + b}{\lambda},
\end{split}
\end{equation*}
where $b := 8 N^2 + 2 N^2 \log(|\mathcal{A}|) \log(|I|) \big(1 + \log(|I|)\big) + 2 N \log(|I|) + N \log(|\mathcal{A}|)$.

\end{proof}

\subsection{Final Greedy MD-CURL regret analysis}

Replacing the bounds of Prop.~\ref{prop:dyn_regret_rt_mdp} and~\ref{prop:dyn_regret_rt_policy} in Eq.~\eqref{dyn_regret_decomposition_greedy_mdcurl} yields the final upper bound of Greedy MD-CURL's dynamic regret for any interval $I \subseteq T$ with respect to any sequence of policies $(\pi^{t,*})_{t \in I}$:
\begin{theorem}[Dynamic regret of Greedy MD-CURL]\label{thm:dynamic_regret_md_curl}
         Let $b$ be a constant defined as
     \[
b:=  8 N^2 + 2 N^2 \log(|\mathcal{A}|) \log(|I|) \big(1 + \log(|I|)\big) + 2 N \log(|I|) + N \log(|\mathcal{A}|) .
     \]
     Let $\delta \in (0,1)$. With probability at least $1-2\delta$, for any interval $I \subseteq [T]$, for any sequence of policies $(\pi^{t,*})_{t \in I}$, for any learning rate $\lambda > 0$, and for $\alpha_t := 1/t$, Greedy MD-CURL obtains
     \[
     R_I \big((\pi^{t,*})_{t \in I}\big) \leq \lambda L_F^2 |I| + \frac{N^2 \Delta_I^{\pi^*} + b}{\lambda} + 2 F N^2  \sqrt{\frac{1}{2} \log\bigg(\frac{N |\mathcal{X}| | \mathcal{A}| 2^{|\mathcal{X}|} T}{\delta} \bigg)} \sqrt{|I|} + 2 |I| \Delta^p_I
     \]
\end{theorem}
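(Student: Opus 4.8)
The plan is to assemble the final bound directly from the three-term decomposition of the dynamic regret already recorded in Eq.~\eqref{dyn_regret_decomposition_greedy_mdcurl}, namely
\[
R_I\big((\pi^{t,*})_{t \in I}\big) = R_I^{\text{MDP}}(\pi^t) + R_I^{\text{policy}}\big((\pi^{t,*})_{t \in I}\big) + R_I^{\text{MDP}}(\pi^{t,*}),
\]
and to bound each summand using the two propositions proved just above. All the analytical work has been carried out in Prop.~\ref{prop:dyn_regret_rt_mdp} (the estimation error incurred by using $\hat{p}^t$ in place of $p^t$) and Prop.~\ref{prop:dyn_regret_rt_policy} (the cost of solving the mirror-descent step~\eqref{opt_problem_mirror_descent}), so the remaining task is simply to add the three contributions while keeping careful track of the probabilistic statements and of the multiplicities.

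First I would dispatch the middle term. By Prop.~\ref{prop:dyn_regret_rt_policy}, with the choice $\alpha_t = 1/t$, the policy regret satisfies $R_I^{\text{policy}} \le \lambda L_F^2 |I| + (N^2 \Delta_I^{\pi^*} + b)/\lambda$ for \emph{every} learning rate $\lambda > 0$. This bound is deterministic, so it consumes no probability budget and holds uniformly in $\lambda$. Next I would bound the two outer terms by invoking Prop.~\ref{prop:dyn_regret_rt_mdp} once for $R_I^{\text{MDP}}(\pi^t)$ and once for $R_I^{\text{MDP}}(\pi^{t,*})$; each is at most $L_F N^2 \sqrt{\tfrac{1}{2}\log\big(N|\mathcal{X}||\mathcal{A}|2^{|\mathcal{X}|}T/\delta\big)}\sqrt{|I|} + |I|\Delta_I^p$, each on an event of probability at least $1-\delta$. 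Summing the two identical MDP bounds doubles both the concentration term and the $|I|\Delta_I^p$ term, which is exactly how the factors of $2$ in the statement arise; adding the deterministic policy bound then reproduces the claimed inequality (the symbol $F$ appearing in the statement is the Lipschitz constant $L_F$).

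The only genuine care point — there is no real obstacle, since the hard estimates are inherited — is the probabilistic bookkeeping. I would take a union bound over the two events on which Prop.~\ref{prop:dyn_regret_rt_mdp} is invoked, producing an overall event of probability at least $1-2\delta$ on which the combined inequality holds simultaneously for every interval $I \subseteq [T]$ and every $\lambda > 0$; this is precisely why the theorem carries confidence $1-2\delta$ rather than $1-\delta$. I would also record that Prop.~\ref{prop:dyn_regret_rt_mdp} already delivers a bound valid simultaneously for all intervals (it rests on the uniform concentration of Lemma~\ref{lemma:aux_proba_1}), so no extra union bound over the choice of $I$ is required. Finally, reading off the constants $c_1 = L_F^2$, $c_2 = N^2$, and $c_3 = b$, the resulting estimate has the black-box template of Eq.~\eqref{blackbox_regret_near_stat} up to the additional statistical $\sqrt{|I|}$ term and a harmless constant factor on $|I|\Delta_I^p$, confirming that Greedy MD-CURL qualifies as a baseline routine for MetaCURL.
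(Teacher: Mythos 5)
Your proposal is correct and follows exactly the paper's own argument: the paper likewise obtains the theorem by substituting Prop.~\ref{prop:dyn_regret_rt_mdp} (applied to both $\pi^t$ and $\pi^{t,*}$, giving the factors of $2$ and the $1-2\delta$ confidence via a union bound) and Prop.~\ref{prop:dyn_regret_rt_policy} into the decomposition of Eq.~\eqref{dyn_regret_decomposition_greedy_mdcurl}. Your observations that the $F$ in the statement should read $L_F$ and that the constants instantiate Eq.~\eqref{blackbox_regret_near_stat} with $c_1 = L_F^2$, $c_2 = N^2$, $c_3 = b$ are also consistent with the paper.
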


Hence, Greedy MD-CURL meets the requisite dynamic regret bound from Eq.~\eqref{blackbox_regret_near_stat} to serve as a baseline algorithm for MetaCURL achieving optimal dynamic regret.

\end{document}